\documentclass[11pt,letterpaper]{mystyle}

\usepackage[utf8]{inputenc}
\usepackage[T1]{fontenc}
\usepackage{xcolor}
\usepackage{float}
\usepackage{listings}

\definecolor{colorfirstBase}{RGB}{243, 152, 150}   % 原始红色
\definecolor{colorsecondBase}{RGB}{255, 203, 153}  % 原始橙色
\definecolor{colorthirdBase}{RGB}{254, 255, 152}   % 原始黄色

\colorlet{colorfirst}{colorfirstBase!40}   % 25% 透明度 (保留75%原色)
\colorlet{colorsecond}{colorsecondBase!30} % 25% 透明度
\colorlet{colorthird}{colorthirdBase!30}   % 25% 透明度

\definecolor{oursgreen}{RGB}{46,139,87}
\definecolor{vanillagray}{RGB}{108,117,125}
\definecolor{aflowred}{RGB}{196,78,82}
\definecolor{searchorange}{RGB}{221,132,82}
\definecolor{softblue}{RGB}{54,98,148}
\definecolor{softgold}{RGB}{181,142,52}
\definecolor{lightgray}{RGB}{246,247,249}
\definecolor{lightgreen}{RGB}{236,247,240}
\definecolor{lightred}{RGB}{252,239,240}
\definecolor{lightblue}{RGB}{236,242,250}

\newcommand{\method}{VOI}
\newcommand{\vanilla}{BAVT}
\newcommand{\bats}{BATS}
\newcommand{\aflow}{AFlow}
\newcommand{\searcho}{Search-o1}
\newcommand{\search}{\ifmmode\text{{\normalfont\scshape Search}}\else{\normalfont\scshape Search}\fi}
\newcommand{\decompose}{\ifmmode\text{{\normalfont\scshape Decompose}}\else{\normalfont\scshape Decompose}\fi}
\newcommand{\answer}{\ifmmode\text{{\normalfont\scshape Answer}}\else{\normalfont\scshape Answer}\fi}

\newcommand{\hotpot}{HotpotQA}
\newcommand{\twowiki}{2WikiMultihopQA}
\newcommand{\musique}{MuSiQue}
\newcommand{\bamboogle}{Bamboogle}
\newcommand{\budgetlow}{\texttt{low}}
\newcommand{\budgetlowermid}{\texttt{lower-mid}}
\newcommand{\budgetuppermid}{\texttt{upper-mid}}
\newcommand{\budgethigh}{\texttt{high}}
\newcommand{\cmark}{\ding{51}}
\newcommand{\xmark}{\ding{55}}
\newcommand{\circnum}[1]{\tikz[baseline=(char.base)]\node[shape=circle,draw=black!70,inner sep=0.5pt,minimum size=1.55em,font=\scriptsize\bfseries] (char) {#1};}
\newlist{contriblist}{enumerate}{1}
\setlist[contriblist]{label=\protect\circnum{\arabic*},leftmargin=*,itemsep=0.3em,topsep=0.35em,parsep=0pt,partopsep=0pt}
\newcolumntype{P}[1]{>{\raggedright\arraybackslash}p{#1}}
\newcolumntype{C}[1]{>{\centering\arraybackslash}p{#1}}

\lstdefinestyle{appendixpromptstyle}{
  basicstyle=\ttfamily\scriptsize,
  breaklines=true,
  breakatwhitespace=false,
  columns=fullflexible,
  keepspaces=true,
  frame=single,
  framerule=0.4pt,
  rulecolor=\color{black!35},
  framesep=2mm,
  xleftmargin=0mm,
  xrightmargin=0mm,
  showstringspaces=false,
  upquote=true,
}

\newcommand{\AppendixPromptTitle}[1]{%
  \par\vspace{0.8em}\noindent
  \colorbox{black!70!white}{%
    \parbox{\dimexpr\linewidth-2\fboxsep\relax}{\color{white}\textbf{\small #1}}%
  }%
  \par\nobreak\vspace{0.2em}%
}

\title{Inference-Time Budget Control for \\LLM Search Agents}
\runningtitle{Budget Control for LLM Search Agents}

\author{Zhengru Fang et al.}

\newcommand{\paperauthors}{%
Zhengru Fang$^{1,*}$,
Senkang Hu$^{1,*}$,
Zhonghao Chang$^{2}$,
Yu Guo$^{1}$,
Yihang Tao$^{1}$,
Hongyao Liu$^{1}$,
Mengzhe Ruan$^{3}$,
Jun Huang$^{1}$,
Yuguang Fang$^{1}$\\[0.3em]
\normalfont
$^1$ City University of Hong Kong,
$^2$ Tsinghua University,
$^3$ Alibaba Ant Group,
* Equal contribution%
}

\makeatletter
\renewcommand{\maketitle}{%
\vphantom{a}
\vspace{0mm}
\noindent
\begin{minipage}[t]{0.5\textwidth}
    \includegraphics[height=1.5cm]{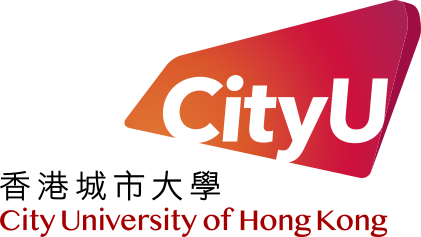}
\end{minipage}%
\hfill
\begin{minipage}[t]{0.5\textwidth}
    \raggedleft
\end{minipage}

\vspace{-5mm}
\noindent\rule{\textwidth}{0.4pt}
\vspace{-10pt}
\vspace{-10pt}

\begin{tcolorbox}[titlebox]
{\raggedright \titlefont \textbf{\@title}\par}
\vspace{4mm}
{\raggedright \paperauthors\par}
\vspace{4mm}
\abscontent
\end{tcolorbox}
\suppressfloats[t]
}
\makeatother

\begin{document}
\begin{abstract}
LLM search agents increasingly rely on tools at inference time, but their trajectories are often constrained by hard limits on both tool calls and generated tokens. Under such dual budgets, better answers require not only stronger models, but also explicit control over which search action should receive the next budget unit and when the accumulated evidence is sufficient to commit a final answer. We study this problem in multi-hop question answering (QA) and formulate it as two-stage inference-time budget control. At search time, our controller assigns each feasible action a task-level Value-of-Information (VOI) score, defined as an operational estimate of marginal task value per unit budget under the current search state and remaining dual budget, and uses this score to choose among retrieval, decomposition, and answer commitment. After search, a selective evidence-grounded finalizer compares the trajectory answer with a refined candidate and rewrites only when the residual error appears to be a low-risk answer-form error. Across four multi-hop QA benchmarks, three LLM backbones, and four budget levels, the method yields positive aggregate gains over four audited baselines under the same hard dual-budget protocol. Ablations show that search-time budget control, especially budget-dependent penalty, provides the main performance gain, while answer-time control helps mainly when the retrieval path is already adequate. These results suggest that inference-time budget control for LLM search agents should govern both how budget is spent during search and how the final answer is committed.
\end{abstract}

\maketitle
\pagestyle{headstyle}
\thispagestyle{empty}

\linenumbers
\section{Introduction}
\label{sec:intro}

% ============================================================
% BLOCK 1: Background — tool-augmented agents and test-time scaling
% ============================================================

Large language models (LLMs) are increasingly deployed as search agents that use external tools during inference. ReAct~\citep{yao2022react}, Toolformer~\citep{schick2023toolformer}, BATS~\citep{bats}, and Search-o1~\citep{DBLP:journals/corr/abs-2501-05366} treat retrieval as part of the inference trajectory rather than a fixed preprocessing step. Broader agent systems add reflection, tree search, collaboration, and software tools~\citep{shinn2023reflexion,yao2023tree,chen2023agentverse,wang2023voyager,zhou2023webarena,zhou2023language,lu2025octotools}. In this paradigm, answer quality depends not only on model capability, but also on how the agent spends tool-call and token budgets while inference is still unfolding.

This paradigm also changes test-time scaling from a token-only problem into an inference-time control problem. Prior work studies how additional inference compute improves outputs through repeated sampling, self-refinement, and adaptive allocation~\citep{wang2022self,DBLP:conf/nips/MadaanTGHGW0DPY23,snell2025scaling,han2025token,li2025selfbudgeter}. For search agents, however, compute is not token-only: extra budget can create more branches, intermediate states, and low-yield actions. Systems such as Search-R1~\citep{jin2025search}, BATS~\citep{bats}, and recent browsing agents~\citep{zhu2025scaling,wei2025browsecomp,thinking_doing,hu2026inforeasoner} already operate in this tool-heavy regime. Deployment studies likewise show that unconstrained tool use leads to budget-inefficient failures, including redundant retrieval, wasted branching, and brittle context growth~\citep{chen2023frugalgpt,zhang2023ecoassistant,kim2025cost,lu2025exploring}. Under explicit budgets, more inference-time compute and better answers are no longer equivalent.

% ============================================================
% BLOCK 2: Problem statement + Why the problem is hard
% ============================================================

This raises the central inference-time control problem of \emph{how to decide, under explicit tool-call and output-token budgets, what to do next, and how to commit a final answer once search stops.} Existing budget-aware approaches either route across models to reduce cost~\citep{chen2023frugalgpt,zhang2023ecoassistant} or inject budget tracking into the agent loop~\citep{bats,cats}, but none of them explicitly control both action allocation during search and answer resolution after search under the same hard dual-budget audit, which motivates this work. Three difficulties make this control problem nontrivial.

\textbf{Challenge 1: Budget must be allocated across heterogeneous actions.}
At each step a search agent can retrieve evidence, decompose the question, or answer directly. These actions differ in cost and return, and the right choice depends on the evidence state and remaining budget. Infrastructure analyses show that naive scaling leads to over-search and diminishing returns~\citep{kim2025cost,cemri2025multi}, while token-level budgeting cannot resolve this action-level trade-off under coupled tool-call and output-token budgets~\citep{han2025token,li2025selfbudgeter}.

\textbf{Challenge 2: Final-answer errors persist after successful search.}
Even with the right evidence, the final answer can fail on exactness: yes/no polarity, binary choice, typed slots, or alias completion. Self-refinement~\citep{DBLP:conf/nips/MadaanTGHGW0DPY23}, stepwise verification~\citep{DBLP:journals/corr/abs-2501-19306,DBLP:conf/cidr/0001YF0LH24}, and reasoning-aware selection~\citep{DBLP:conf/naacl/WanWCL25} can help, but they target open-ended generation rather than budgeted search where every extra call has a cost.

\textbf{Challenge 3: Rewriting introduces intervention risk.}
Unconditional answer rewriting can erase bridge structure or reverse comparative semantics in multi-hop settings~\citep{trivedi-etal-2023-interleaving,trivedi2022musique}. Reflexion-style feedback loops~\citep{shinn2023reflexion} mitigate some errors but do not explicitly model the risk of damaging a correct answer. Thus, an answer controller must intervene only when the expected gain outweighs risk.

% ============================================================
% BLOCK 3: Our approach and contributions (compact)
% ============================================================
To address these challenges, we propose a training-free, two-stage inference-time controller built on a tree-search backbone inspired by BAVT~\citep{bavt}. At search time, the controller scores each feasible action by task-level VOI: an operational estimate of marginal task value per unit budget under the current trajectory state and remaining dual budget, rather than Shannon information gain or Bayesian posterior value. The score combines critic-derived value change, structural signals, cost normalization, budget penalty, and conservative guards to choose among retrieval, decomposition, and answer commitment. After search, an evidence-grounded finalizer rewrites only when expected exactness gain outweighs the risk of damaging an otherwise adequate trajectory answer. The method therefore controls two distinct decision points: how budget is spent during search, and how the final answer is committed after search.
We evaluate \method{} under a shared hard dual-budget audit across four multi-hop QA benchmarks, three LLM backbones, and a four-level budget ladder. The results support explicit two-stage control while also exposing its boundary conditions across datasets, budgets, and backbones. Our main contributions are as follows:
\begin{itemize}[leftmargin=1.5em,itemsep=0.30em]
    \item[\ding{182}] \textbf{Two-stage inference-time budget-control formulation.} We formulate budget-aware inference in LLM search agents as a two-stage inference-time control problem: search-time action allocation under coupled tool-call and output-token budgets, followed by answer-time finalization under intervention risk.

    \item[\ding{183}] \textbf{Task-level VOI control for search actions.}
    We introduce a training-free scorer that ranks retrieval, decomposition, and answer commitment by estimated marginal task value per unit budget. The scorer combines critic-derived progress, structural signals, cost normalization, budget-dependent penalty, and conservative guards.

    \item[\ding{184}] \textbf{Risk-controlled answer finalization.}
    We introduce an evidence-grounded finalizer that rewrites only for low-risk answer-form errors, such as yes/no polarity, binary choices, typed-slot mismatches, or supported factoid completion, and abstains when bridge or comparative reasoning remains unresolved.

    \item[\ding{185}] \textbf{Empirical analysis under hard dual-budget audit.}
We evaluate across four benchmarks, three backbones, and four budget levels under the same hard dual-budget protocol. The results show positive aggregate gains and many low- and mid-budget improvements, while ablations identify budget-dependent penalty as the dominant component and expose backbone- and dataset-dependent boundary conditions where the gains diminish.

\end{itemize}

\section{Related Work}
\label{sec:related}

\textbf{Search agents and tool-augmented inference.}
Tool use shifted LLMs from static text generation to interactive decision-making. ReAct~\citep{yao2022react} interleaves reasoning and acting, Toolformer~\citep{schick2023toolformer} studies self-supervised tool use, and later systems add reflection, tree search, collaboration, environment grounding, and software tools~\citep{shinn2023reflexion,yao2023tree,chen2023agentverse,liuagentbench,wang2023voyager,zhou2023webarena,zhou2023language,qiao2024autoact,xie2024osworld,yang2024swe,lu2025octotools}. Agent evaluation has also expanded toward real-world and proactive-assistance settings~\citep{tang2026proactive}. Search-oriented agents such as Search-R1~\citep{jin2025search}, BATS~\citep{bats}, Search-o1~\citep{DBLP:journals/corr/abs-2501-05366}, BrowseComp~\citep{wei2025browsecomp}, and recent browsing systems~\citep{DBLP:journals/corr/abs-2505-22648,thinking_doing,DBLP:journals/corr/abs-2506-15741,DBLP:journals/corr/abs-2507-02592,DBLP:journals/corr/abs-2507-16075,DBLP:journals/corr/abs-2508-02694,DBLP:journals/corr/abs-2508-07976,DBLP:journals/corr/abs-2508-07999,DBLP:journals/corr/abs-2508-09129} push toward open-ended information seeking. We instead study QA agents under explicit tool-call and output-token budgets.

\begin{wrapfigure}[25]{r}{0.34\textwidth} \vspace{-0.8em} \centering \includegraphics[width=0.95\linewidth]{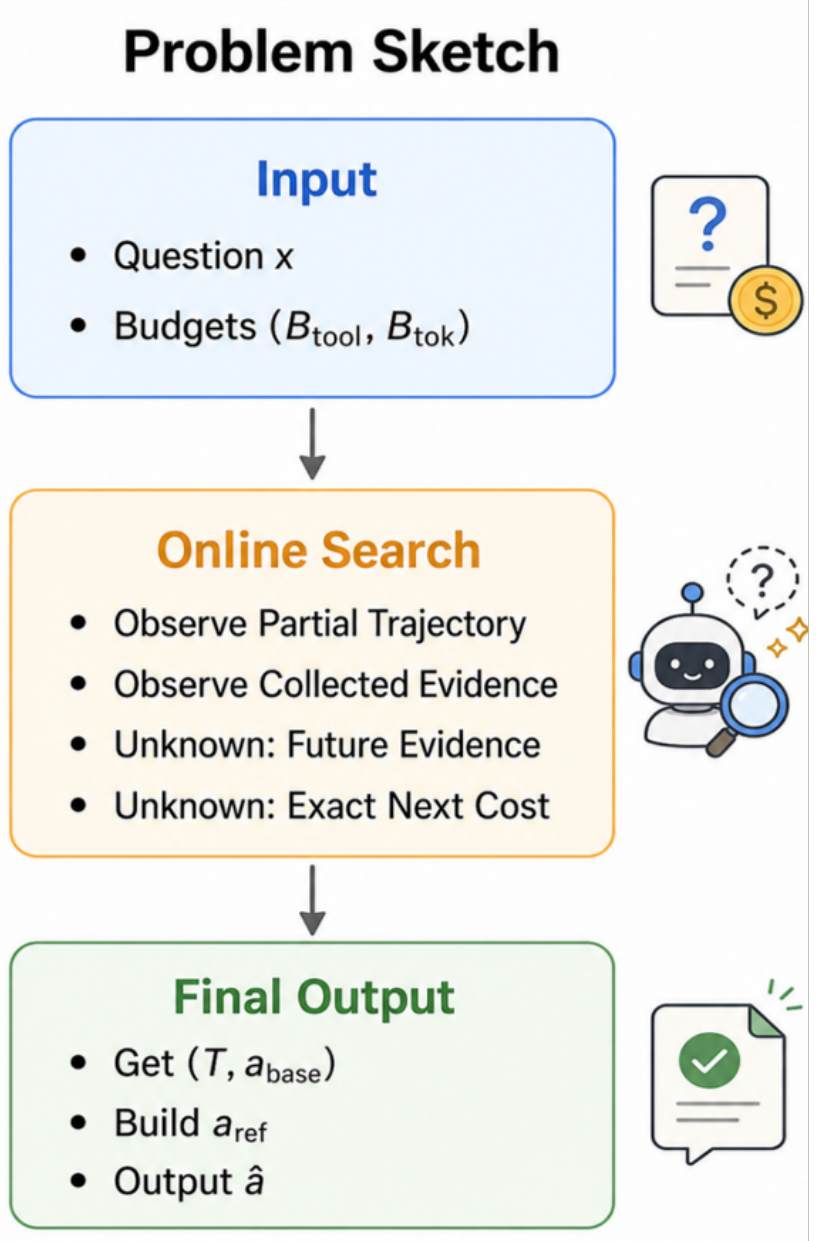} \caption{\textbf{Problem sketch.} The agent chooses the next search action from the observed trajectory, evidence, and remaining budget; future evidence and exact next cost are unknown.} \label{fig:problem_sketch} \vspace{-0.5em} \end{wrapfigure}

\textbf{Test-time scaling and budget-aware agent scaling.}
Test-time scaling work turns extra inference compute into better outputs through repeated sampling, self-refinement, early stopping, meta-generation, adaptive compute, efficient task adaptation, and token-budgeted reasoning~\citep{wang2022self,DBLP:conf/nips/MadaanTGHGW0DPY23,DBLP:conf/iclr/LiYFPW0W024,DBLP:journals/tmlr/WelleckBFSXNKH24,snell2025scaling,DBLP:conf/naacl/WanWCL25,han2025token,li2025selfbudgeter,hu2025distribution,DBLP:journals/corr/abs-2407-19825,DBLP:journals/corr/abs-2407-21787,DBLP:journals/corr/abs-2501-19306,DBLP:journals/corr/abs-2501-19393,DBLP:journals/corr/abs-2504-00762,DBLP:journals/corr/abs-2504-13367,DBLP:journals/corr/abs-2503-24235}. Budget-aware inference adds a harder question: not just how much extra compute to spend, but where. FrugalGPT~\citep{chen2023frugalgpt} and EcoAssistant~\citep{zhang2023ecoassistant} route across models, while agent-scaling work~\citep{zhu2025scaling,bats,kim2025cost,lu2025exploring} argues that tool use changes the scaling problem itself. Related edge and mobile intelligence studies further highlight that LLM and AI deployment is constrained by communication, caching, model downloading, continual adaptation, and split or federated execution costs~\citep{qu2025mobile,qu2026trimcaching,wu2026review,wu2026lifecycle,ding2026application}. We follow this resource-aware framing at a finer granularity: step-level action control during search and selective intervention at finalization.

\textbf{Workflow search and tree-based control.}
DSPy~\citep{omar2024dspy}, PromptAgent~\citep{xin2024promptagent}, Promptbreeder~\citep{chirs2024probreed}, ADAS~\citep{adas}, AgentSquare~\citep{agentsquare}, AFlow~\citep{aflow}, AutoFlow~\citep{ze2024autoflow}, EvoFlow~\citep{evoflow}, MermaidFlow~\citep{mermaidflow}, and HyEvo~\citep{hyevo} optimize prompts, module graphs, or workflows before deployment. Related LLM-based automatic algorithm-design work also uses language models to evolve heuristics or heuristic sets before inference~\citep{liu2024evolution,liu2026eoh}. A separate line treats inference as structured search over states, thoughts, or plans, as in Tree of Thoughts~\citep{yao2023tree}, Graph of Thoughts~\citep{besta2024graph}, Language Agent Tree Search~\citep{zhou2023language}, BAVT~\citep{bavt}, and CATS~\citep{cats}. Our method belongs to this second family: given an existing search backbone, it controls which action spends the next budget unit and how the final answer is resolved.

\textbf{Answer verification and selective intervention.}
Self-Refine~\citep{DBLP:conf/nips/MadaanTGHGW0DPY23} shows that post-hoc editing can improve outputs, while stepwise debugging~\citep{li2024debug}, VerifAI~\citep{DBLP:conf/cidr/0001YF0LH24}, Reasoning-Aware Self-Consistency~\citep{DBLP:conf/naacl/WanWCL25}, and SETS~\citep{DBLP:journals/corr/abs-2501-19306} show that revision quality depends on trigger timing and intervention risk. Our final-answer layer applies that lesson to budgeted search QA: the finalizer keeps the trajectory answer unless the expected gain clearly outweighs the intervention risk, since aggressive answer replacement can erase bridge structure or comparative semantics even when the trajectory is adequate.

\section{Problem Formulation}
\label{sec:problem}

We study inference-time budget control for an LLM search agent answering a question $x$ under dual budgets $B=(B_{\mathrm{tool}},B_{\mathrm{tok}})$, where $B_{\mathrm{tool}}$ limits tool calls and $B_{\mathrm{tok}}$ limits budgeted output tokens. The agent faces two coupled decisions. As illustrated in Figure~\ref{fig:problem_sketch}, the agent must choose the next search action from the currently observed trajectory, collected evidence, candidate answer state, and remaining budget, before future evidence and the exact next token cost are known. The agent faces two coupled decisions. During search, it must decide how to allocate the remaining budget across retrieval, decomposition, and answer commitment. After search terminates, it must decide whether to keep the trajectory answer or replace it with a refined answer derived from the collected evidence. The problem is therefore to control both budget allocation during search and answer commitment after search.

Let $\mu$ denote the search-time decision rule. Given $x$ and $B$, the search process produces a trajectory and base answer $(\mathcal{T},a_{\mathrm{base}})=\mathcal{M}(x,B;\mu).$
The decision rule is applied online. Before choosing the next operation, the agent observes the question, the partial trajectory, the evidence and decompositions collected so far, any candidate answer states, and the remaining tool-call and token budgets. It does not know future evidence, final answer quality, or the exact output-token cost of the next operation. Therefore, action selection uses an estimated local budget charge available before execution, while the realized tool-call and output-token costs are debited after the operation is executed. 

The completed trajectory $\mathcal{T}$ records the executed operations, collected evidence, intermediate states, candidate answers, and realized costs.
After the trajectory is complete, a refinement function $g_{\mathrm{ref}}$ constructs a candidate refined answer $a_{\mathrm{ref}}=g_{\mathrm{ref}}(\mathcal{T},a_{\mathrm{base}}),$
and a second policy $\sigma$ chooses the final prediction $\hat a$ from $\{a_{\mathrm{base}},a_{\mathrm{ref}}\}$. This stage is intentionally narrow: it is not unrestricted rewriting, but selective intervention between preserving the trajectory answer and replacing it with a refined one. This distinction matters in multi-hop QA, where an unnecessary rewrite can remove bridge entities, alter comparison structure, or replace a precise answer with a broader but less faithful one.
We use $R(a,y)\in[0,1]$ as an abstract bounded QA reward, where $a$ is a predicted answer and $y$ is the gold answer. In experiments, answer quality is reported using exact match and token-level F1; the formulation only requires a bounded scalar reward. The expectation is over the evaluation distribution of $(x,y)$ and the search trajectory induced by the policies. The cost functions $C_{\mathrm{tool}}(\mathcal{T})$ and $C_{\mathrm{tok}}(\mathcal{T})$ denote realized trajectory costs.
We formulate the overall problem as
\begin{equation}
\max_{\mu,\sigma}\; \mathbb{E}\!\left[R(\hat a,y)\right]
\quad \text{s.t.} \quad
C_{\mathrm{tool}}(\mathcal{T}) \le B_{\mathrm{tool}},\;
C_{\mathrm{tok}}(\mathcal{T}) \le B_{\mathrm{tok}},\;
\mathbb{E}\!\left[L_{\mathrm{harm}}(\hat a,a_{\mathrm{base}},y)\right] \le \rho_{\mathrm{harm}},
\label{eq:two_stage_problem}
\end{equation}
where $\rho_{\mathrm{harm}}$ is the allowable expected harm from answer replacement.

The harm term measures the damage caused by replacing the base answer with a worse final answer:
\[
L_{\mathrm{harm}}(\hat a,a_{\mathrm{base}},y)
=
\max\{0,R(a_{\mathrm{base}},y)-R(\hat a,y)\}.
\]
This quantity is zero when the final decision preserves or improves answer quality and positive only when finalization makes the answer worse than the original trajectory answer. Under this formulation, the first stage is an online budget-allocation problem during search, and the second stage is a risk-controlled answer-finalization problem after search. Our method instantiates these two decisions using a search-time controller and an answer-time selector with abstention.

\section{Method}
\label{sec:method}
\begin{figure*}[t]
\centering
\includegraphics[width=\textwidth]{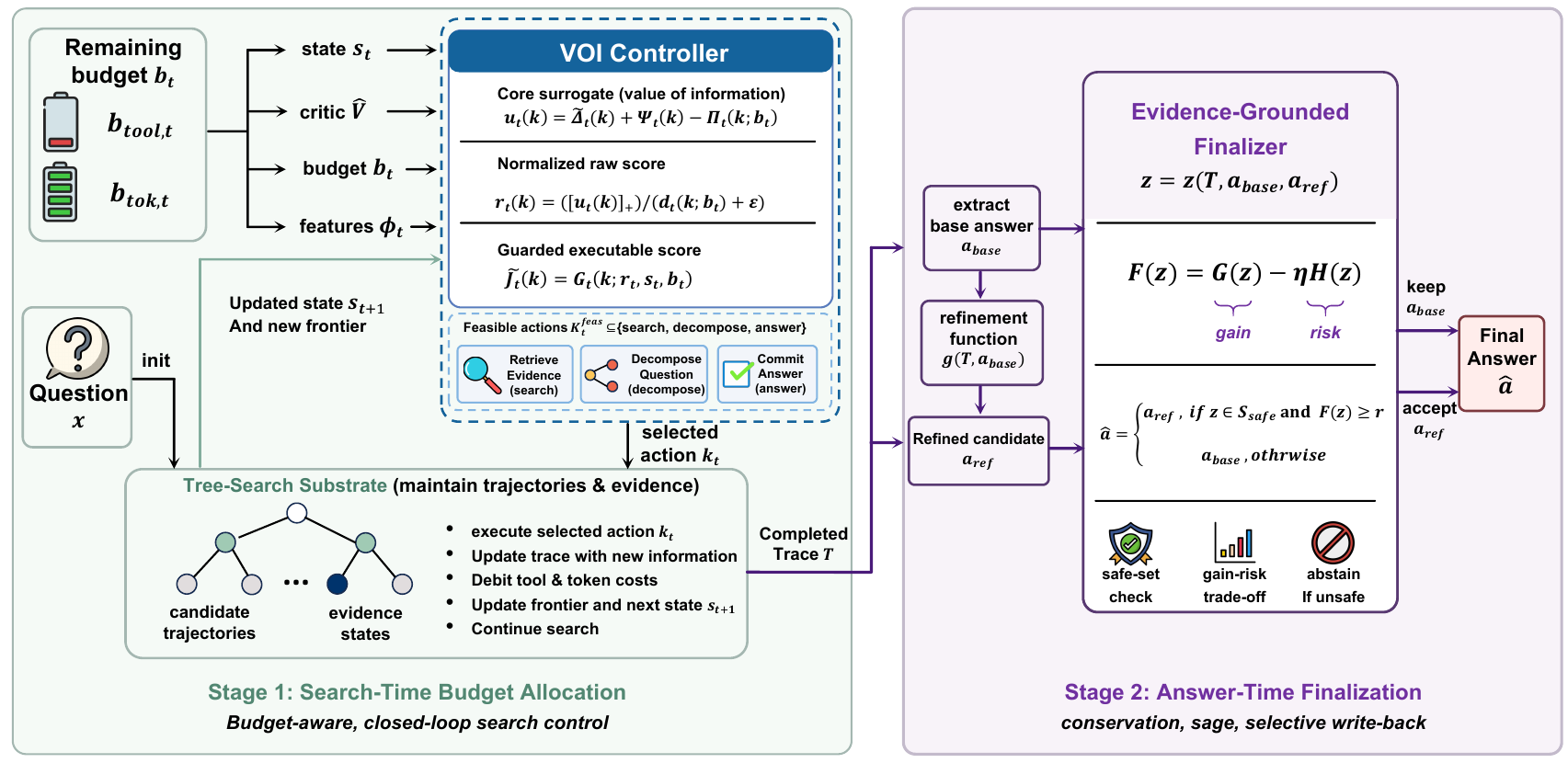}
\caption{\textbf{Two-stage budget control with task-level \method{}.}
Stage 1 uses a controller based on the task-level VOI score to choose whether the next step should retrieve, decompose, or answer under the remaining dual budget. Stage 2 finalizes the answer conservatively, rewriting only when the case is safe and the expected gain outweighs rewrite risk.}
\label{fig:teaser}
\vspace{-6mm}
\end{figure*}

% \subsection{Overview}
% \label{subsec:overview}

Our method instantiates the two decisions in Section~\ref{sec:problem} with two lightweight layers on top of a generic search backbone. Figure~\ref{fig:teaser} summarizes the full pipeline. In Stage~1, the backbone maintains candidate trajectories and evidence, while the VOI controller scores feasible actions under the remaining dual budget and selects the next operation. The selected operation is executed by the tree-search substrate, which updates the trace and debits realized tool-call and token costs. In Stage~2, the completed trace yields a base answer and a refined candidate; the finalizer keeps the base answer unless a safe, evidence-supported refinement passes the gain--risk check.
The search-time layer chooses among evidence acquisition, question decomposition, and answer commitment under the remaining budget. The answer-time layer then chooses between a base answer $a_{\mathrm{base}}$ extracted from the trace and a refined answer $a_{\mathrm{ref}}$ constructed from the same trace. The final output is $\hat a$.

\subsection{Search-Time Budget Allocation}
\label{subsec:controller}

\paragraph{Task-level VOI control.}
At each search step, the controller ranks feasible actions using a task-level VOI score: an operational estimate of marginal task value per unit budget under the current search state and remaining dual budget. This score is used only for action selection; it is not Shannon information gain, Bayesian posterior value, or a learned value model.

We instantiate the online decision rule in Section~\ref{sec:problem} as follows. At step $t$, the observed search status is represented by $s_t$, the remaining budget by $b_t=(b_{\mathrm{tool},t},b_{\mathrm{tok},t})$, and the feasible action set by $\mathcal{K}^{\mathrm{feas}}_t \subseteq \mathcal{K}$, where $\mathcal{K}=\{\search,\decompose,\answer\}$. For each feasible action $k$, the controller assigns a pre-execution charge vector $g_t(k)=(g_{\mathrm{tool},t}(k),g_{\mathrm{tok},t}(k))^\top$ and extracts fixed features from $s_t$ summarizing unresolved evidence, compositional structure, answer readiness, stagnation, loop pressure, and premature-answer risk.

The controller scores each feasible action by first forming a budget-shaped utility, then normalizing it as value per budget, and finally applying guards. First, it forms
\begin{equation}
u_t(k)
=
\underbrace{\widehat{\Delta}_t(k)}_{\text{progress signal}}
+
\underbrace{\Psi_t(k)}_{\text{structural signals}}
-
\underbrace{\Pi_t(k;\,b_t)}_{\text{budget-dependent penalty}},
\label{eq:controller_score}
\end{equation}
where $\widehat{\Delta}_t(k)$ is a critic-derived progress signal, $\Psi_t(k)$ aggregates structural signals such as bridge structure, loop pressure, readiness, and early-answer risk, and $\Pi_t(k;b_t)$ is a signed budget-dependent penalty term: it is positive for \search{} and \decompose{} under tight budgets. $\Pi_t(k;b_t)$ serves as a tractable proxy for the oracle budget shadow-cost $\lambda_t^{\star\top}g_t(k)$, with approximation error $\varepsilon_t^{\Pi}$ formalized in Assumption~\ref{assump:score_approx}.
Second, this task-level action utility is clipped and normalized into the task-level VOI score used for action selection:
\begin{equation}
r_t(k)
=
\frac{[u_t(k)]_+}{d_t(k;\,b_t)+\epsilon},
\label{eq:normalized_score}
\end{equation}
where $[\cdot]_+=\max\{0,\cdot\}$, $\epsilon>0$ is a small constant, and $d_t(k;b_t)>0$ is an action-specific budget scale derived from the pre-execution charge and current budget state. We call $r_t(k)$ the task-level VOI score: an operational estimate of marginal task value per unit budget under the current state and remaining dual budget.
\begin{definition}[Task-level VOI score]
For feasible $k\in\mathcal{K}^{\mathrm{feas}}_t$, the task-level VOI score is $r_t(k)=[u_t(k)]_+/(d_t(k;b_t)+\epsilon)$, where $u_t(k)$ is the task-level action utility and $d_t(k;b_t)$ is the budget-aware action scale. This score estimates marginal task value per unit budget for action selection under the current search state and remaining dual budget.
\end{definition}
In the released implementation, $u_t(k)$ is instantiated by fixed coefficients over explicit trajectory features, $d_t(k;b_t)$ by action-specific budget-aware cost terms, and the executable score $\widetilde{\mathcal{J}}_t(k)$ by these normalized scores after conservative guard adjustments. 

Third, action-specific guards produce the executable score:
\begin{equation}
\widetilde{\mathcal{J}}_t(k)
=
\mathfrak{G}_t\!\bigl(k;\,r_t,\,s_t,\,b_t\bigr),
\label{eq:guarded_score}
\end{equation}
where $\mathfrak{G}_t$ is a deterministic guard operator that masks or adjusts raw scores, including premature-answer suppression, factoid-decomposition suppression, and minimum-search enforcement on compositional cases. The controller selects $k_t=\arg\max_{k\in\mathcal{K}_t^{\mathrm{feas}}}\widetilde{\mathcal{J}}_t(k)$.

This design is adaptive, training-free, and interpretable: scores are recomputed at every step, no extra value model is learned, and each term corresponds to an explicit search or budget signal. Appendix~\ref{app:mechanism} provides empirical controller diagnostics.

\subsection{Answer-Time Finalization}
\label{subsec:selector}

After search terminates, the system extracts a base answer $a_{\mathrm{base}}$ from the best trajectory and constructs a refined candidate $a_{\mathrm{ref}}$ from the same trace $\mathcal{T}$. Finalization is narrow but important: the search path may be largely correct while the final answer still contains local answer-form errors, such as wrong yes/no polarity, wrong binary choice, incomplete alias, or typed-slot mismatch. Unnecessary rewriting remains risky because fluent refinements can erase bridge entities, distort comparisons, or replace precise answers with broader but less faithful ones.

We therefore formulate finalization as selective intervention over the two-candidate set $\{a_{\mathrm{base}},a_{\mathrm{ref}}\}$. From the completed trace and the candidate pair, we construct a consistency feature vector
$z=z(\mathcal{T},a_{\mathrm{base}},a_{\mathrm{ref}})$.
This vector records evidence relevant to safe revision: support type, slot type, contradiction indicators, and unresolved bridge or comparative reasoning. For exposition, finalization is written as a gain--risk threshold over this feature representation; in implementation, it is a deterministic selector over explicit feature conditions, not a learned scorer or extra LLM call:
\begin{equation}
F(z)=G(z)-\eta H(z), \qquad
\hat a =
\begin{cases}
a_{\mathrm{ref}}, & \text{if } z\in\mathcal{S}_{\mathrm{safe}} \text{ and } F(z)\ge\tau,\\[2mm]
a_{\mathrm{base}}, & \text{otherwise},
\end{cases}
\label{eq:finalizer_score}
\end{equation}
where $G(z)$ measures the potential gain, $H(z)$ measures the intervention risk, $\eta>0$ controls risk aversion, $\tau$ is an abstention threshold, and $\mathcal{S}_{\mathrm{safe}}$ is the set of reliable revision cases. In the released implementation, the abstract map $g$ is instantiated by a fixed construction $a_{\mathrm{ref}}=g_{\mathrm{ref}}(\mathcal{T},a_{\mathrm{base}})$, and the selector is a deterministic rule over explicit feature conditions. Appendix~\ref{app:proofs} characterizes the exact rule in Proposition~\ref{prop:det_finalizer}.

The safe set $\mathcal{S}_{\mathrm{safe}}$ excludes unresolved bridge structure, unresolved comparative semantics, and missing direct support, where rewriting is most likely to harm a correct trajectory answer. Thus the finalizer is a conservative answer selector, not a generic editor: it intervenes only when the remaining error appears local to answer-form correction rather than retrieval or path selection. It adds no tool calls and issues no additional LLM call during finalization.

\subsection{End-to-End Inference}
\label{subsec:inference}

At inference time, the search procedure maintains a frontier of trajectories while the controller repeatedly selects the next feasible operation using Eqs.~\eqref{eq:controller_score}--\eqref{eq:guarded_score}. It debits realized tool-call and output-token costs after each operation and stops when the frontier or budget is exhausted, or when the search procedure's termination condition fires. The system then extracts $a_{\mathrm{base}}$, constructs $a_{\mathrm{ref}}$, and applies Eq.~\eqref{eq:finalizer_score}. Full pseudocode is in Appendix~\ref{app:algorithms}.

\paragraph{Theoretical support.}
Appendix~\ref{app:proofs} provides the formal support for the two controller layers. For search-time allocation, Appendix~\ref{app:search_time_proofs} shows that the task-level utility in Eq.~\eqref{eq:controller_score} locally approximates an oracle budget-charged one-step lookahead value, yielding ranking consistency under a margin condition and a one-step value-gap bound when guards are compatible with the utility ranking. For answer-time finalization, Appendix~\ref{app:answer_time_proofs} derives the gain--risk threshold in Eq.~\eqref{eq:finalizer_score} from the harm-constrained objective in Eq.~\eqref{eq:two_stage_problem}. The analysis supports budget-aware action selection and conservative answer replacement, but does not claim global optimality of the full search tree.

\section{Experiments}
\label{sec:exp}

\subsection{Setup}
\label{subsec:setup}

We instantiate the LLM search agent with three backbones: Qwen3-32B~\citep{yang2025qwen3}, Qwen3.5-122B~\citep{qwen3.5}, and GPT-5.4-Mini~\citep{openai2026gpt54mini}, chosen to span different model scales, families, and deployment regimes rather than tuning to a single LLM. We evaluate on four multi-hop QA benchmarks: \hotpot~\citep{yang2018hotpotqa}, \twowiki~\citep{ho2020constructing}, \musique~\citep{trivedi2022musique}, and \bamboogle~\citep{press2023measuring}. All methods use the same retrieval configuration: Search-R1 retrieval, question-only queries, and \texttt{top\_k=5}. We evaluate under four dual-budget levels, \budgetlow, \budgetlowermid, \budgetuppermid, and \budgethigh{}, corresponding to $(1,100)$, $(2,200)$, $(2,300)$, and $(3,500)$, where each pair denotes the tool-call cap and output-token cap.

The audit is strict at the example level: all five methods are scored under the same hard tool/output-token budget constraint, and any example exceeding either constraint is counted as failed. We compare \method{} with \vanilla{}~\citep{bavt}, \bats{}~\citep{bats}, \aflow{}~\citep{aflow}, and \searcho{}~\citep{DBLP:journals/corr/abs-2501-05366}. We report EM, token-level F1, average tool calls, and average budget output tokens; accounting details, cross-backbone confidence intervals, and feasible-only usage diagnostics are provided in Appendix~\ref{app:scope}.

\subsection{Main Results}
\label{subsec:main_results}

\begin{figure*}[t]
\centering
\includegraphics[width=\textwidth]{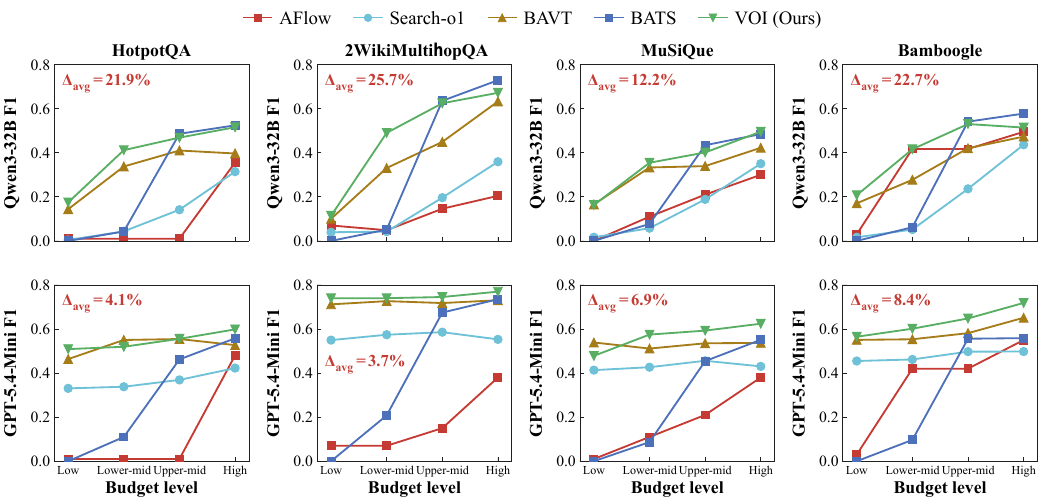}
\caption{\textbf{Cross-model budget scaling curves across four datasets.} Rows report Qwen3-32B and GPT-5.4-Mini; columns report the four multi-hop QA benchmarks. All methods use the shared hard dual-budget audit. Qwen3.5-122B results are reported in Appendix~\ref{app:qwen35_scaling} as a backbone-sensitivity analysis.}
\label{fig:scaling_cross_model}
\vspace{-7mm}
\end{figure*}
Figure~\ref{fig:scaling_cross_model} shows the main budget-scaling pattern on two representative backbones. On Qwen3-32B, whose full EM/F1 table is reported in Appendix~\ref{app:main_results_qwen32b}, \method{} is best-F1 in 7 of 16 dataset-budget cells and tied best in 2 additional cells. It improves over \aflow{}, \bats{}, \searcho{}, and \vanilla{} in 14/16, 10/16, 16/16, and 15/16 cells, respectively. The strongest gains appear in low and lower-mid budgets, where choosing the next action carefully is most important because wasted retrieval or premature answering quickly exhausts the trajectory. The pattern is positive but not a dominance claim. \bats{} remains strong in several upper-budget regimes, especially when broader beam-style evidence accumulation is useful, and \aflow{} leads on \twowiki{} \budgetlowermid{}. These exceptions are important because they show that budget control interacts with dataset structure and budget level. The main claim is therefore not that \method{} wins every cell, but that explicit action-level budget control improves most audited settings under the same hard budget constraints.

On GPT-5.4-Mini, \method{} remains consistently competitive despite the stronger base model. This is useful evidence that the controller is not merely compensating for a weak backbone. The gains are smaller in some cells than on Qwen3-32B, which is expected: stronger backbones can already resolve more answer-form and evidence-selection errors without as much controller help. Still, the curves show that the two-stage controller continues to provide value when the model family changes.

We report Qwen3.5-122B separately in Appendix~\ref{app:qwen35_scaling}. That backbone exhibits a more mixed regime, with several cells led by \bats{} or \vanilla{} at higher budgets. We treat it as a backbone-sensitivity case rather than the main trend. For completeness, Appendix~\ref{app:scope} reports cross-backbone macro deltas over all three backbones, four benchmarks, and four budgets; the aggregate effect remains favorable, while the detailed curves confirm substantial backbone dependence.

\subsection{Final-Answer Control Improves Exactness}
\label{subsec:selector_results}

Final-answer control is most useful when residual errors are answer-form errors rather than search-depth failures, clearest on \bamboogle. Evidence-grounded finalization improves the measured \bamboogle\ budget ladder without changing tool usage: F1 rises from 0.4382 to 0.4628 at \budgetlow, from 0.5786 to 0.6047 at \budgetuppermid, and from 0.5056 to 0.5576 at \budgethigh. An additional \texttt{(1,200)} probe shows the same trend, improving from 0.4631 to 0.4911 on \bamboogle\ and from 0.4907 to 0.5174 on \twowiki. Thus, answer-time control yields visible gains when the trajectory already contains adequate evidence but the final answer still has an answer-form error.

Successful Stage~2 interventions are typed-slot corrections, binary-choice repairs, yes/no polarity repairs, and supported factoid completions. This matches the design target: the finalizer is a conservative exactness layer, not a generic rewrite module. It creates no new evidence and launches no additional search or additional LLM call during finalization. Its gains therefore concentrate on answer-form errors, while path-level failures remain outside its scope. This also explains the weaker contribution on \musique{}, where residual errors more often involve unresolved bridge structure rather than a local answer span.

\subsection{Search-Time Controller Component Ablation}
\label{subsec:stage1_component_ablation}

Section~\ref{subsec:controller} defines the Stage~1 pipeline: task-level action utility, value-per-cost normalization, and guard-adjusted executable scoring in Eqs.~\eqref{eq:controller_score}--\eqref{eq:guarded_score}. We ablate these components under the \budgetuppermid{} budget on Qwen3-32B by removing budget-dependent penalty $\Pi_t(k;b_t)$, replacing $r_t(k)$ with $[u_t(k)]_+$, removing $\Psi_t(k)$, or bypassing $\mathfrak{G}_t$. All variants use the same search procedure, Search-R1 retrieval, question-only queries, \texttt{top\_k=5}, and hard dual-budget audit.

% \begin{table*}[t]
% \centering
% \scriptsize
% \setlength{\tabcolsep}{4.2pt}
% \renewcommand{\arraystretch}{1.08}
% \caption{\textbf{Stage-1 component ablation on Qwen3-32B (\budgetuppermid{} budget).}
% Each cell reports \texttt{EM/F1} under the hard dual-budget audit.}
% \label{tab:stage1_component_ablation_qwen32b_full}
% \resizebox{\textwidth}{!}{%
% \begin{tabular}{lcccccc}
% \toprule
% Benchmark & Vanilla BAVT & w/o penalty & w/o norm. & w/o struct. & w/o guards & VOI (Full) \\
% \midrule
% HotpotQA  & 0.34/0.41 & 0.35/0.40 & 0.34/0.38 & 0.32/0.37 & 0.34/0.39 & \textbf{0.39/0.47} \\
% \twowiki{}   & 0.40/0.45 & 0.38/0.43 & 0.44/0.50 & 0.39/0.45 & 0.42/0.48 & \textbf{0.54/0.63} \\
% MuSiQue   & 0.26/0.34 & 0.24/0.30 & 0.25/0.33 & 0.30/0.39 & 0.29/0.38 & \textbf{0.36/0.43} \\
% Bamboogle & 0.33/0.42 & 0.31/0.40 & 0.35/0.43 & 0.28/0.37 & 0.28/0.36 & \textbf{0.46/0.56} \\
% \bottomrule
% \end{tabular}%
% }
% \vspace{-7mm}
% \end{table*}

\begin{table*}[t]
\centering
\caption{\textbf{Stage-1 component ablation on Qwen3-32B (\budgetuppermid{} budget).}
Each cell reports \texttt{EM/F1} under the hard dual-budget audit.
The 1st, 2nd, and 3rd best results are highlighted in \colorbox{colorfirst}{\textbf{first}}, \colorbox{colorsecond}{second}, and \colorbox{colorthird}{third} colors. Cell background colors are ranked by the average of EM and F1 scores. Best EM and F1 are independently bolded. $\uparrow$ indicates higher is better.}
\label{tab:stage1_component_ablation_qwen32b_full}
\scriptsize
\setlength{\tabcolsep}{4.2pt}
\renewcommand{\arraystretch}{1.08}
{\fontfamily{ptm}\selectfont
\resizebox{\textwidth}{!}{%
\begin{tabular}{lcccccc}
\toprule
\multirow{2}{*}{\textbf{Benchmark}} & \textbf{\vanilla{}} & \textbf{w/o penalty} & \textbf{w/o norm.} & \textbf{w/o struct.} & \textbf{w/o guards} & \textbf{\method{} (Full)} \\
\cmidrule(lr){2-7}
& \textbf{EM/F1} $\uparrow$ & \textbf{EM/F1} $\uparrow$ & \textbf{EM/F1} $\uparrow$ & \textbf{EM/F1} $\uparrow$ & \textbf{EM/F1} $\uparrow$ & \textbf{EM/F1} $\uparrow$ \\
\midrule
HotpotQA  & \cellcolor{colorsecond} 0.34/0.41 & \cellcolor{colorsecond} 0.35/0.40 & 0.34/0.38 & 0.32/0.37 & \cellcolor{colorthird} 0.34/0.39 & \cellcolor{colorfirst} \textbf{0.39}/\textbf{0.47} \\
\twowiki{}   & 0.40/0.45 & 0.38/0.43 & \cellcolor{colorsecond} 0.44/0.50 & 0.39/0.45 & \cellcolor{colorthird} 0.42/0.48 & \cellcolor{colorfirst} \textbf{0.54}/\textbf{0.63} \\
MuSiQue   & 0.26/0.34 & 0.24/0.30 & 0.25/0.33 & \cellcolor{colorsecond} 0.30/0.39 & \cellcolor{colorthird} 0.29/0.38 & \cellcolor{colorfirst} \textbf{0.36}/\textbf{0.43} \\
Bamboogle & \cellcolor{colorthird} 0.33/0.42 & 0.31/0.40 & \cellcolor{colorsecond} 0.35/0.43 & 0.28/0.37 & 0.28/0.36 & \cellcolor{colorfirst} \textbf{0.46}/\textbf{0.56} \\
\bottomrule
\end{tabular}%
}
}
\vspace{-7mm}
\end{table*}

Table~\ref{tab:stage1_component_ablation_qwen32b_full} reports the benchmark-level results. The full controller achieves the best F1 on all four benchmarks, while every component removal reduces the macro average. The largest drop comes from removing budget-dependent penalty, showing that the remaining-budget term is not a cosmetic penalty but a central part of the action-allocation rule. Removing normalization, structural signals, or guards also hurts, although the affected benchmark differs. This pattern supports the method design: the improvement is tied to the three-stage task-level VOI score controller rather than to a generic prompt constraint, search-procedure change, or answer-time finalization effect.

\paragraph{Inference-time cost.}
Table~\ref{tab:runtime_cost_ablation} reports an end-to-end wall-clock timing probe under the Qwen3-32B \budgetuppermid{} setting with 100 examples per dataset. The full \method{} controller reduces mean inference time relative to \vanilla{} on all four benchmarks, from 20.91s to 15.23s on average, a 27.2\% reduction. Thus the added deterministic controller does not introduce a visible latency burden in this setting; its action choices often shorten trajectories by avoiding low-value operations. The \twowiki{} row also shows a boundary case where some ablated variants are faster than the full controller, indicating dataset-dependent runtime behavior.

\begin{table*}[t]
\centering
\caption{\textbf{End-to-end inference time under Qwen3-32B and the \budgetuppermid{} budget.}
Each cell reports mean wall-clock seconds per example; parenthesized values give relative change against \vanilla{}.
Lower is better. The 1st, 2nd, and 3rd best results, ranked solely by mean inference time, are highlighted in
\colorbox{colorfirst}{\textbf{first}}, \colorbox{colorsecond}{second}, and \colorbox{colorthird}{third} colors.}
\label{tab:runtime_cost_ablation}
\scriptsize
\setlength{\tabcolsep}{4.2pt}
\renewcommand{\arraystretch}{1.08}
{\fontfamily{ptm}\selectfont
\resizebox{\textwidth}{!}{%
\begin{tabular}{lcccccc}
\toprule
\multirow{2}{*}{\textbf{Benchmark}} 
& \textbf{\vanilla{}}
& \textbf{w/o penalty}
& \textbf{w/o norm.}
& \textbf{w/o struct.}
& \textbf{w/o guards}
& \textbf{\method{} (Full)} \\
\cmidrule(lr){2-7}
& \textbf{Mean Time (s)} $\downarrow$
& \textbf{Mean Time (s)} $\downarrow$
& \textbf{Mean Time (s)} $\downarrow$
& \textbf{Mean Time (s)} $\downarrow$
& \textbf{Mean Time (s)} $\downarrow$
& \textbf{Mean Time (s)} $\downarrow$ \\
\midrule
\bamboogle
& 19.92
& \cellcolor{colorsecond} 16.25 {\scriptsize(-18.4\%)}
& 16.38 {\scriptsize(-17.8\%)}
& 17.15 {\scriptsize(-13.9\%)}
& \cellcolor{colorthird} 16.34 {\scriptsize(-18.0\%)}
& \cellcolor{colorfirst} \textbf{14.41} {\scriptsize(-27.7\%)} \\

\hotpot
& 22.01
& \cellcolor{colorsecond} 17.44 {\scriptsize(-20.7\%)}
& 17.79 {\scriptsize(-19.1\%)}
& 17.57 {\scriptsize(-20.1\%)}
& \cellcolor{colorthird} 17.45 {\scriptsize(-20.7\%)}
& \cellcolor{colorfirst} \textbf{14.41} {\scriptsize(-34.5\%)} \\

\musique
& 20.23
& 16.65 {\scriptsize(-17.7\%)}
& 16.98 {\scriptsize(-16.1\%)}
& \cellcolor{colorthird} 16.47 {\scriptsize(-18.6\%)}
& \cellcolor{colorsecond} 16.22 {\scriptsize(-19.8\%)}
& \cellcolor{colorfirst} \textbf{14.79} {\scriptsize(-26.9\%)} \\

\twowiki
& 21.49
& 15.81 {\scriptsize(-26.4\%)}
& \cellcolor{colorfirst} \textbf{15.25} {\scriptsize(-29.0\%)}
& \cellcolor{colorsecond} 15.53 {\scriptsize(-27.7\%)}
& \cellcolor{colorthird} 15.64 {\scriptsize(-27.2\%)}
& 17.30 {\scriptsize(-19.5\%)} \\

\midrule
\textbf{Average}
& 20.91
& \cellcolor{colorthird} 16.54 {\scriptsize(-20.9\%)}
& 16.60 {\scriptsize(-20.6\%)}
& 16.68 {\scriptsize(-20.2\%)}
& \cellcolor{colorsecond} 16.41 {\scriptsize(-21.5\%)}
& \cellcolor{colorfirst} \textbf{15.23} {\scriptsize(-27.2\%)} \\
\bottomrule
\end{tabular}}
}
\vspace{-5mm}
\end{table*}
\subsection{Two-Stage Component Ablation}
\label{subsec:two_stage_ablation}

Figure~\ref{fig:two_stage_component_ablation} isolates the two stages under the same audited setting. Stage~1 alone improves F1 on all four benchmarks, confirming that most of the gain comes from search-time budget allocation. The full method yields relative F1 gains of $+5.7\%$, $+11.8\%$, $+14.7\%$, and $+18.4\%$ over \vanilla{} on HotpotQA, 2WikiMultihopQA, MuSiQue, and Bamboogle, respectively. Stage~2 contributes no additional gain on HotpotQA, but accounts for $13.4\%$, $41.9\%$, and $27.8\%$ of the total gain on 2WikiMultihopQA, MuSiQue, and Bamboogle. This supports the intended division of labor: Stage~1 provides broad budget-aware search control, while Stage~2 acts as a sparse exactness layer when the retrieved trajectory is already mostly adequate.
The finalizer is sparse, conservative, and exactness-oriented: it avoids broad rewriting and provides targeted corrections when the retrieved trajectory is adequate but the answer still has an answer-form error.

\section{Discussion and Limitations}
\label{sec:discussion}
\begin{wrapfigure}[20]{r}{0.35\textwidth}
\vspace{-1.8em}
\centering
\includegraphics[width=\linewidth]{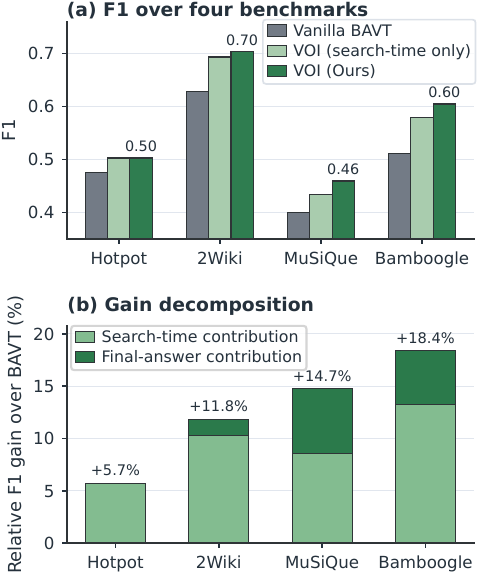}
\caption{\textbf{Two-stage component ablation.}
Top: absolute F1 for \vanilla{}, search-time-only \method{}, and full \method{}.
Bottom: relative F1 gain over \vanilla{}, split into search-time and final-answer contributions; labels report total gain and the Stage~2 share.}
\label{fig:two_stage_component_ablation}
\vspace{-7mm}
\end{wrapfigure}
Our results show that budget-aware search is not a monotone scaling problem. Larger tool-call and output-token budgets can improve evidence coverage, but may also introduce redundant search or noisier finalization. This is visible on \twowiki{} and \bamboogle{}, where middle budgets sometimes outperform \budgethigh{}. The task-level VOI controller mitigates this trade-off by making budget spending state-dependent, but does not remove the tension between exploration and commitment.

The main boundary appears in high-budget regimes and across backbones. \bats{} remains competitive in several upper-budget cells, and Qwen3.5-122B shows a more mixed regime in Appendix~\ref{app:qwen35_scaling} because stronger backbones narrow the relative gap, consistent with the diminishing-returns regime expected for any inference-time controller layered on a more capable base model. The finalizer is also exactness-oriented rather than reasoning-complete: it can repair local answer-form errors, but cannot recover from wrong retrieval paths or unresolved bridge relations. Extending the controller to stronger retrieval backends, richer query rewriting, and longer-horizon browsing remains future work.

\section{Conclusion}
\label{sec:conclusion}

This paper studies tool-augmented LLM search under explicit tool-call and output-token budgets. We formulate the problem as two-stage constrained inference: a search-time decision over how to spend the next unit of budget, followed by an answer-time decision over whether a refined answer is worth the intervention risk. Our method instantiates this formulation with a training-free controller based on the task-level VOI score over \search{}, \decompose{}, and \answer{}, together with a conservative evidence-grounded finalizer that rewrites only under low-risk exactness conditions.
Across four multi-hop QA benchmarks, four budget levels, and three LLM backbones, the results show that budget control is useful but not uniform. The search-time controller provides the main gains by improving action allocation under strict budgets, while the finalizer adds a sparse exactness benefit when the trajectory already contains adequate evidence. The remaining failures clarify the boundary of the approach: more budget is not always better, backbone behavior matters, and answer-time control cannot repair incorrect retrieval paths.

\textbf{Broader impacts.}
Budget-aware search can improve the deployability of tool-augmented agents by reducing unnecessary tool use, making inference cost more predictable, and exposing when an agent chooses to search, decompose, or answer. These properties are useful for applications where latency, token usage, or external tool calls must be controlled. At the same time, more efficient search agents could also be used in harmful information-seeking workflows. We therefore emphasize hard budget audits, explicit accounting, conservative finalization, and failure analysis, so that budgeted agent behavior is easier to inspect rather than easier to hide.

% \paragraph{Broader impact.}
% Budget-aware search can improve deployability, transparency, and cost control for information-seeking agents, especially when tool calls or generated tokens are expensive. The same techniques could also be used to optimize harmful agents under fixed resource limits. We therefore emphasize hard budget audits, explicit failure cases, and conservative finalization so that budgeted agent behavior is easier to inspect rather than easier to hide.
\bibliographystyle{refstyle}
\bibliography{references}

\newpage
\appendix

\section{Controller Behavior Analysis}
\label{app:mechanism}

This appendix complements Section~\ref{sec:method} with descriptive evidence on the realized behavior of the search-time controller. These results are not additional algorithmic components; they visualize how the implemented controller scores feasible actions and activates guards across budget and compositionality states.

\begin{figure*}[hbpt]
\centering
\includegraphics[width=0.98\textwidth]{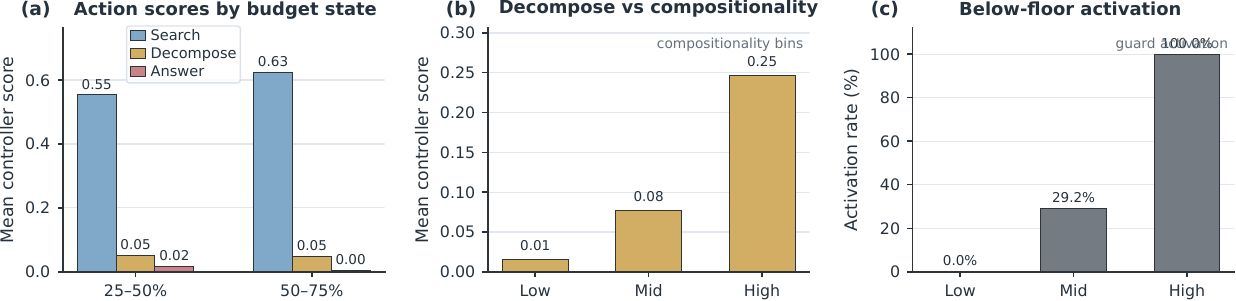}
\caption{\textbf{Empirical behavior of the search-time controller.}
\textbf{(a)} Mean controller scores for \search{}, \decompose{}, and \answer{} across realized remaining-budget bands.
\textbf{(b)} The mean \decompose{} score increases with question compositionality, consistent with the intended role of decomposition in resolving bridge structure.
\textbf{(c)} The below-floor guard activates mainly for high-compositionality states, reflecting the conservative minimum-search behavior used to avoid premature answer commitment.
All quantities are descriptive statistics computed from realized audited trajectories.}
\label{fig:controller_mechanism_diagnostics}
\vspace{-0.5em}
\end{figure*}
\begin{table}[hbpt]
\centering
\small
\caption{\textbf{Summary of the search-time controller.}}
\label{tab:controller_summary}
\begin{tabular}{p{0.17\linewidth}p{0.33\linewidth}p{0.33\linewidth}}
\toprule
\textbf{Action} & \textbf{Main positive signals} & \textbf{Main suppressing signals} \\
\midrule
\textsc{Search} &
unresolved evidence, missing new support, low closure, loop pressure &
high budget pressure, repeated search saturation \\
\textsc{Decompose} &
high compositionality, unresolved bridge structure, stagnation after search &
factoid-like question, repeated decomposition saturation \\
\textsc{Answer} &
high closure, strong answer support, candidate answer present, tighter budget &
early-answer risk, unresolved bridge structure, weak support \\
\bottomrule
\end{tabular}
\end{table}

As summarized in Table~\ref{tab:controller_summary}, the search-time controller is a fixed rule-based scorer over three actions: \textsc{Search}, \textsc{Decompose}, and \textsc{Answer}. At each decision step, it evaluates four aspects of the current search state. First, it estimates whether important evidence remains unresolved and whether another retrieval step is still likely to help. Second, it checks whether the question appears structurally compositional, for example through unresolved bridge structure, in which case decomposition becomes more valuable than ordinary search. Third, it evaluates answer readiness through closure, answer support, and the presence of a candidate answer span, while penalizing premature commitment when support remains weak. Fourth, it applies budget pressure and action-specific costs so that expensive actions become less attractive as the remaining tool and token budget shrinks. In addition to these smooth score terms, the controller uses a small number of hard guards: it enforces a minimum amount of search on compositional cases, suppresses unnecessary decomposition on factoid-like questions, and blocks overly early answer commitment when structural risk remains high.

\section{Inference Algorithm and Prompt Modules}
\label{app:algorithms}
\nolinenumbers

\begin{algorithm}[t]
\caption{Two-Stage Budget-Aware Inference}
\label{alg:two_stage_inference}
\footnotesize
\begin{algorithmic}[1]
\State \textbf{Input:} question $x$, tool budget $B_{\mathrm{tool}}$, output-token budget $B_{\mathrm{tok}}$
\State initialize the search procedure, frontier $\mathcal{F}$, trace $\mathcal{T}$, and remaining budget $b_0=(B_{\mathrm{tool}},B_{\mathrm{tok}})$
\While{$\mathcal{F}\neq\emptyset$ and $b_t$ is not exhausted and termination has not fired}
    \State select the current frontier state $s_t$ according to the search procedure
    \State construct the feasible operation set $\mathcal{K}^{\mathrm{feas}}_t$
    \For{each feasible action $k\in\mathcal{K}^{\mathrm{feas}}_t$}
        \State compute action-specific costs and controller features from the current search state
    \State score $k$ through the three-stage pipeline: task-level action utility $u_t(k)$ (Eq.~\eqref{eq:controller_score}), normalized raw score $r_t(k)$ (Eq.~\eqref{eq:normalized_score}), guarded executable score $\widetilde{\mathcal{J}}_t(k)$ (Eq.~\eqref{eq:guarded_score})
    \EndFor
    \State select the highest-scoring feasible action $k_t$ and execute it
    \State update the trace $\mathcal{T}$, the frontier $\mathcal{F}$, and the remaining budget $b_{t+1}$
\EndWhile
\State extract the base answer $a_{\mathrm{base}}$ from the best available trajectory
\State construct the refined candidate $a_{\mathrm{ref}}$ from the completed trace $\mathcal{T}$
\State build $z=z(\mathcal{T},a_{\mathrm{base}},a_{\mathrm{ref}})$ and apply the finalization rule (Eq.~\eqref{eq:finalizer_score})
\If{$z\in\mathcal{S}_{\mathrm{safe}}$ and $F(z)\ge\tau$}
    \State \Return $a_{\mathrm{ref}}$
\Else
    \State \Return $a_{\mathrm{base}}$
\EndIf
\end{algorithmic}
\end{algorithm}

\subsection{System Prompt}

Our method reuses the same single-backbone search procedure as BAVT for planning, generation, and value estimation. The central difference is not that we replace the generator with a new workflow, but that we intervene \emph{before} generation with an explicit search-time controller. Concretely, the controller scores feasible actions through the three-stage pipeline in Eqs.~\eqref{eq:controller_score}--\eqref{eq:guarded_score} and selects
\[
k_t^\star = \arg\max_{k \in \mathcal{K}^{\mathrm{feas}}_t} \widetilde{\mathcal{J}}_t(k),
\]
and only then materializes the selected action as a deterministic prompt insertion. The generator therefore still follows a standard system-prompt plus user-turn format, but its behavior is constrained by a controller-selected instruction whose choice is determined by the task-level VOI score outside the LLM. This is the key interface between the theory and the prompt design: the theory decides \emph{which} action should spend the next unit of budget, and the prompt enforces \emph{how} that chosen action must be executed.

\AppendixPromptTitle{System Prompt: Budget-Aware Generator with VOI-Guided Control}
\begin{lstlisting}[style=appendixpromptstyle]
You are a precise AI assistant for budget-constrained multi-hop question answering.

Rules:
- Think step by step and explicitly show your reasoning.
- In each turn, do exactly one action: either call one tool or provide the final answer.
- You may use tools multiple times across turns if needed.
- Follow the current dynamic instruction exactly; it defines what kind of action is allowed in this turn.
- If the instruction requires retrieval, you must produce exactly one <tool_call>.
- If the instruction requires answering, you must not call tools.
- Keep the final answer to a single short grounded span, not a full sentence.
- If an entity may be ambiguous, disambiguate the query with a type or role already supported by the question or evidence.

<budget>
Tool Budget Used: {tool_budget_used}/{tool_budget_total}
Tool Budget Remaining: {b_tool}
Output Token Budget Used: {token_budget_used}/{token_budget_total}
Output Token Budget Remaining: {b_token}
</budget>

{plan_context}

{dynamic_instruction}
\end{lstlisting}

\subsection{Planning Module}

As in BAVT, the agent maintains an abstract plan before and during search. The role of the planning module is not to guess the answer, but to identify the missing pieces of evidence, keep track of bridge facts, and prevent the search from revisiting already exhausted directions. The new contribution of our method is not the planner itself, but the fact that the remaining budget explicitly changes which kind of step the agent is allowed to take next.

\AppendixPromptTitle{Planning Module: High-Level Search Plan}
\begin{lstlisting}[style=appendixpromptstyle]
Before taking stepwise actions, write a brief high-level plan.

- Identify the entities, relations, or attributes that still need evidence.
- Separate direct answer clues from bridge facts that require an intermediate hop.
- Keep the plan abstract; do not invent facts.
- Update the plan only when new evidence changes the remaining uncertainty.
\end{lstlisting}

\subsection{Stage 1: Search-Time Budget Allocation}

Stage 1 is the main search-time contribution of our method. Instead of relying only on BAVT's widen-versus-deepen routing, we explicitly allocate the next unit of budget among three action types: \search, \decompose, and \answer. The action decision is made by the three-stage pipeline based on the task-level VOI score in Eqs.~\eqref{eq:controller_score}--\eqref{eq:guarded_score}; the prompt is only the execution layer that instantiates this decision inside the generator. In other words, the prompt does not choose the action; the controller computes the task-level VOI score. The controller computes a deterministic score from explicit trajectory features, fixed coefficients, budget-aware action costs, and conservative guards, and then injects the chosen action as a deterministic instruction.

\AppendixPromptTitle{User Turn Format for Stage 1}
\begin{lstlisting}[style=appendixpromptstyle]
{history}

Continue reasoning.
If evidence is not yet sufficient, call exactly one search tool.
If evidence is sufficient, answer now.

Output format:
<thought>your reasoning</thought>
<tool_call>search query</tool_call>
<answer>FINAL_ANSWER</answer>

Rules for this turn:
- Emit either <tool_call> or <answer>, not both, unless the dynamic instruction explicitly allows answering now.
- When you use <answer>, include only the shortest correct answer span and no explanation.
- When you use <tool_call>, the query must be concrete, grounded in the question, and non-redundant with the immediately preceding failed path.
- If the question asks for a specific place, date, person, title, or slot value, the answer must use that grounded string rather than a broader paraphrase.
- If the current hypothesis is ambiguous, disambiguate the next query explicitly with a type or role.
\end{lstlisting}

The controller-to-prompt interface is therefore
\[
(s_t,b_t)
\xrightarrow{\text{Eqs.~\eqref{eq:controller_score}--\eqref{eq:guarded_score}}}
k_t^\star\in\{\search,\decompose,\answer\}
\xrightarrow{\text{prompt injection}}
\texttt{dynamic\_instruction}.
\]
The four action templates below are the paper-specific prompt components that implement this interface.

\AppendixPromptTitle{Stage 1 Instruction: Search}
\begin{lstlisting}[style=appendixpromptstyle]
Instruction: SEARCH.
The expected marginal utility of one more retrieval step is highest.
You must call exactly one search tool now.
Do not answer and do not continue with tool-free reasoning.

Inside <thought>, do all of the following:
- State the most specific unresolved fact that still blocks the answer.
- Explain briefly why another retrieval step is preferable to answering now.
- Reuse concrete entities already grounded by the question or evidence.
- If there is ambiguity, state how the query will disambiguate it.

Then emit exactly one <tool_call>.
The query should be targeted, entity-specific, and aimed at closing one missing factual gap rather than broad exploration.
Do not output <answer> in this turn.
\end{lstlisting}

\AppendixPromptTitle{Stage 1 Instruction: Decompose}
\begin{lstlisting}[style=appendixpromptstyle]
Instruction: DECOMPOSE.
You have been searching without enough progress, and the remaining uncertainty appears to be compositional.
You must call exactly one search tool now.
Do not answer and do not continue with tool-free reasoning.

Inside <thought>, do all of the following:
- Identify the missing bridge entity, intermediate fact, or latent relation that connects the hops.
- Summarize what has already been established from the current trajectory.
- Explain why another ordinary search step is less useful than an explicit bridge-fact query.
- Formulate one precise retrieval target that would resolve the compositional gap.

Then emit exactly one <tool_call> with that targeted query.
The query must focus on the missing bridge information, not on re-searching the full question from scratch.
Do not output <answer> in this turn.
\end{lstlisting}

\AppendixPromptTitle{Stage 1 Instruction: Answer}
\begin{lstlisting}[style=appendixpromptstyle]
Instruction: ANSWER.
The controller judges that answer commitment has higher utility-per-cost than further retrieval.
Do not call tools.

Inside <thought>, briefly verify that the answer is directly supported by the evidence already gathered.
Then return exactly one line in the form <answer>FINAL_ANSWER</answer>.

Answer requirements:
- The answer must be the shortest grounded answer span.
- Do not output explanation outside the tag.
- If the question is yes/no, FINAL_ANSWER must be exactly yes or no.
- If the question is a binary choice, output only the selected option.
\end{lstlisting}

\AppendixPromptTitle{Stage 1 Instruction: Budget Backstop}
\begin{lstlisting}[style=appendixpromptstyle]
Instruction: BUDGET BACKSTOP.
The remaining budget is exhausted or too small for another meaningful retrieval step.
You must answer now and you must not call tools.

Inside <thought>, use only the evidence already present in the trajectory.
Do not speculate beyond the grounded information.
If the evidence is partial, prefer the most directly supported short answer rather than an expansive paraphrase.

Return exactly one line in the form <answer>FINAL_ANSWER</answer>.
Keep the answer as short as possible.
If the question is yes/no, FINAL_ANSWER must be exactly yes or no.
\end{lstlisting}

Table~\ref{tab:stage1_budget_allocation} summarizes the Stage-1 interface. The key distinction from vanilla BAVT is that our controller makes an explicit budget-allocation decision over action types before generation. In the released code, we also include simple guardrails to prevent tool-free stalling and to enforce a required search step whenever the controller allocates budget to retrieval, but we do not treat these guardrails as separate method components.

\begin{table}[H]
\centering
\scriptsize
\caption{\textbf{Stage 1 compared with related prompting styles.} Our method differs from prior systems not by adding more free-form prompt complexity, but by using an explicit controller to choose which prompt constraint is injected at each step.}
\label{tab:stage1_budget_allocation}
\setlength{\tabcolsep}{3pt}
\renewcommand{\arraystretch}{1.12}
\begin{tabular}{@{}P{0.21\linewidth}C{0.17\linewidth}C{0.18\linewidth}C{0.20\linewidth}C{0.17\linewidth}@{}}
\toprule
\textbf{Method} & \textbf{\shortstack{Interleaved\\retrieval}} & \textbf{\shortstack{Step-level\\instruction}} & \textbf{\shortstack{Budget-aware\\action choice}} & \textbf{\shortstack{Conservative\\finalization}} \\
\midrule
\searcho & \cmark & Partial & \xmark & \xmark \\
\vanilla & \cmark & \cmark & Partial & \xmark \\
\textbf{Ours} & \cmark & \cmark & \cmark & \cmark \\
\bottomrule
\end{tabular}
\end{table}

Here ``budget-aware action choice'' means that the action type is selected before generation by an explicit controller rather than inferred from free-form continuation. In our case, the controller first selects \textsc{Search}, \textsc{Decompose}, or \textsc{Answer}; the corresponding instruction is then injected into the generator.

\subsection{Stage 2: Answer-Time Finalization}

Stage 2 is an answer-selection module applied after search terminates. Starting from the completed trajectory, the system compares the trajectory answer with a refined candidate derived from the same evidence. The goal is not to launch another round of free-form rewriting, but to correct local exactness errors when the evidence clearly supports a safer, more specific answer span. If the case still looks structurally risky, such as unresolved bridge reasoning or comparative semantics, the module abstains and keeps the trajectory answer. Importantly, this stage adds no new tool calls and no additional LLM call during finalization.

We therefore present Stage 2 as an answer-selection card rather than as another generator prompt. The point is precisely that this module is \emph{not} a fresh LLM rewrite call. It is a conservative selection policy over two candidates already available from the completed trace.

\AppendixPromptTitle{Stage 2 Selection Card: Conservative Answer Finalization}
\begin{lstlisting}[style=appendixpromptstyle]
Inputs:
- Question
- Trajectory answer $a_{\mathrm{base}}$
- Refined candidate $a_{\mathrm{ref}}$
- Supporting evidence from the completed trajectory

Decision principle:
- If the remaining error is only local answer exactness, prefer the better-supported candidate.
- If the case still contains bridge ambiguity, comparative semantics, or path-level uncertainty, abstain.
- Never add a new tool call.
- Never launch an additional LLM call during finalization.
- Treat abstention as the default whenever the gain-risk trade-off is unclear.

Typical positive cases:
- yes/no polarity repair
- binary choice repair
- typed-slot correction
- supported factoid completion

Typical abstention cases:
- unresolved bridge reasoning
- comparative questions
- decomposition-heavy trajectories
- cases where rewriting would likely change semantics rather than improve exactness
\end{lstlisting}

Table~\ref{tab:answer_time_policy} summarizes this Stage-2 policy. The organizing principle is conservative answer finalization: intervene only when the remaining error appears local to answer-form exactness, and keep the trajectory answer whenever the evidence suggests the real failure is path-level.

\begin{table*}[hbpt]
\centering
\small
\caption{\textbf{Stage 2: Answer-time finalization policy.} This module is conservative by design: it rewrites only when the refined candidate is better supported and the risk of changing the semantics remains low.}
\label{tab:answer_time_policy}
\begin{tabular}{P{0.19\textwidth}P{0.71\textwidth}}
\toprule
\textbf{Case type} & \textbf{Finalization behavior} \\
\midrule
Unresolved bridge or comparative reasoning & Abstain. If the trajectory still reflects unresolved bridge structure, comparison logic, or other path-level uncertainty, keep the trajectory answer. \\
Yes/no or binary choice & Finalize when the evidence clearly resolves the polarity or the choice between the listed options. This is the cleanest type of answer-time repair. \\
Slot-filling exactness & Finalize only when the refined answer adds a small but meaningful amount of specificity, such as a capacity, date, or year range, without changing the underlying fact. \\
Supported factoid completion & Finalize when the refined candidate simply completes an already supported fact span, for example by restoring a canonical phrase or an omitted attribute. \\
General fallback & Otherwise finalize only when the refined candidate is better supported by the trajectory evidence and remains comparably concise; abstain in all remaining cases. \\
\bottomrule
\end{tabular}
\end{table*}

This policy is the appendix-level counterpart of Eq.~\eqref{eq:finalizer_score}. In words, the answer-time module behaves like a high-precision selector rather than a generic editor: it prefers abstention whenever the risk of disturbing a correct trajectory answer outweighs the likely exactness gain.

\subsection{Representative Cases}

Instead of showing long traces, we summarize representative Stage-2 behaviors as compact decision cards. The first three cards are positive exactness repairs, and the last card is an abstention example showing when the finalizer deliberately refuses to rewrite.

\AppendixPromptTitle{Case Card 1: Binary-Choice Exactness Repair}
\begin{lstlisting}[style=appendixpromptstyle]
Question:
Which writer was from England, Henry Roth or Robert Erskine Childers?

Trajectory answer:
Robert Erskine Childers

Refined candidate:
Robert Erskine Childers DSC

Evidence:
The supporting evidence identifies Childers as the English writer and also supports the fuller canonical name span.

Why Stage 2 finalizes:
- This is a binary-choice question rather than an open-ended generation case.
- The refinement does not change which option is selected.
- The refined string is better supported by the trajectory evidence.
- The residual error is local answer exactness, not path-level reasoning.

Outcome:
Finalize to the fuller supported answer span.
\end{lstlisting}

\AppendixPromptTitle{Case Card 2: Supported Factoid Completion}
\begin{lstlisting}[style=appendixpromptstyle]
Question:
What distinction is held by the former NBA player who was a member of the Charlotte Hornets during their 1992--93 season and was head coach for the Charlotte Sting?

Trajectory answer:
shortest player ever to play in the NBA

Refined candidate:
shortest player ever to play in the National Basketball Association

Evidence:
The trajectory evidence ties Muggsy Bogues to both team clues and supports the fuller canonical statement of the distinction.

Why Stage 2 finalizes:
- The refined candidate is not a different claim.
- It is a completion of the same already-supported fact.
- The intervention improves exactness while preserving semantics.
- The gain is local and the rewrite risk is low.

Outcome:
Finalize to the fuller supported factoid span.
\end{lstlisting}

\AppendixPromptTitle{Case Card 3: Typed-Slot Exactness Repair}
\begin{lstlisting}[style=appendixpromptstyle]
Question:
The arena where the Lewiston Maineiacs played their home games can seat how many people?

Trajectory answer:
3,677

Refined candidate:
3,677 seated

Evidence:
The supporting passage contains the exact capacity-bearing phrase rather than the bare number alone.

Why Stage 2 finalizes:
- This is a slot-value correction problem rather than a search failure.
- The refined candidate adds only minimal slot-bearing wording.
- The underlying fact is unchanged.
- The support for the refined phrase is more direct than for the shorter numeric span alone.

Outcome:
Finalize to the typed-slot correction.
\end{lstlisting}

\AppendixPromptTitle{Case Card 4: Abstention under Bridge Risk}
\begin{lstlisting}[style=appendixpromptstyle]
Question:
Which performance act has a higher instrument to person ratio, Badly Drawn Boy or Wolf Alice?

Trajectory answer:
Badly Drawn Boy

Refined candidate:
Badly Drawn Boy

Evidence:
The trajectory already supports the current answer, but the question still has comparative and bridge-sensitive structure.

Why Stage 2 abstains:
- The remaining risk is in comparative multi-hop reasoning, not answer wording.
- A rewrite would offer little exactness gain.
- In this regime, conservative abstention is safer than intervention.
- The finalizer therefore preserves the trajectory answer instead of forcing an unnecessary change.

Outcome:
Abstain and keep the trajectory answer.
\end{lstlisting}

These cards illustrate the intended role of answer-time control. Positive interventions repair answer-form errors that remain after the search trajectory is already essentially correct. The abstention example shows the opposite boundary: when the residual uncertainty is still about multi-hop structure rather than wording, Stage 2 keeps the original trajectory answer.

\section{Theoretical Statements and Proofs}
\label{app:proofs}
\linenumbers

This appendix provides the formal analysis summarized in Section~\ref{sec:method}. We first state the main theorem-level results, then give the definitions, assumptions, auxiliary lemmas, and proofs. The analysis supports the two controller layers: search-time budget allocation and answer-time finalization. It is local and decision-level; it does not claim global optimality of the full search tree.

\subsection{Main Theoretical Statements}
\label{app:theory_statements}

For search-time control, fix step $t$, state $s_t$, budget $b_t$, and feasible action set $\mathcal{K}_t^{\mathrm{feas}}$. Let $V_{t+1}(s,b)$ be the analysis-only oracle continuation value: the scalar maximum expected final reward attainable from state $s$ with remaining budget $b$ under the oracle continuation policy. Define
\[
Q_t^\star(k):=\mathbb{E}\!\left[V_{t+1}\!\bigl(S_{t+1}^{(k)},\,b_t-g_t(k)\bigr)\mid s_t,b_t\right]
\]
as the oracle one-step lookahead value, where $S_{t+1}^{(k)}$ is the next state and $g_t(k)$ is the budget charge. With $V_t^{\mathrm{stop}}$ denoting immediate-stop value, define $\widetilde{Q}_t^\star(k):=Q_t^\star(k)-V_t^{\mathrm{stop}}(s_t,b_t)$ and decompose
\[
\widetilde{Q}_t^\star(k)=\Delta_t^\star(k)+\Psi_t^\star(k)-\Lambda_t^\star(k;b_t)+\xi_t(k),
\]
where $\Delta_t^\star(k)$ is the oracle immediate-gain term, $\Psi_t^\star(k)$ the structural residual, $\Lambda_t^\star(k;b_t)$ the local oracle budget shadow-cost term, and $\xi_t(k)$ a smoothness remainder. This decomposition is only an analysis device mirroring Eq.~\eqref{eq:controller_score}. Let $k_t$ be the implemented action and $k_t^{\mathrm{opt}}$ an oracle-best feasible action. Define
\[
\Gamma_t(k):=\varepsilon_t^{\Delta}+\varepsilon_t^{\Psi}+\varepsilon_t^{\Pi}+\beta_t\|g_t(k)\|+\tfrac{L_t}{2}\|g_t(k)\|^2,
\]
where $\varepsilon_t^{\Delta},\varepsilon_t^{\Psi},\varepsilon_t^{\Pi}$ bound gain, structural, and budget-penalty approximation errors, $\beta_t$ bounds shadow-price mismatch, and $L_t$ is the smoothness constant. The oracle counterpart of the task-level VOI score in Eq.~\eqref{eq:normalized_score} is $r_t^\star(k)=[U_t^\star(k;b_t)]_+/(d_t(k;b_t)+\epsilon)$, where $U_t^\star$ is defined below.

\begin{theorem}[Local approximation of the task-level action utility]
\label{thm:controller_voi_local}
Under the budget smoothness, shadow-price homogeneity, and component-wise approximation conditions in Assumptions~\ref{assump:budget_smooth}--\ref{assump:score_approx}, every feasible action $k\in\mathcal{K}_t^{\mathrm{feas}}$ satisfies
\begin{equation}
\left|u_t(k)-\widetilde{Q}_t^\star(k)\right|\le \Gamma_t(k).
\label{eq:local_utility_bound}
\end{equation}
\end{theorem}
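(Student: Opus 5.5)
The bound will follow from subtracting the oracle decomposition of $\widetilde{Q}_t^\star(k)$ from the definition of $u_t(k)$ in Eq.~\eqref{eq:controller_score}, then bounding each difference with one of Assumptions~\ref{assump:budget_smooth}--\ref{assump:score_approx}. Fix a feasible $k$ and write
\[
u_t(k)-\widetilde{Q}_t^\star(k)=\bigl(\widehat{\Delta}_t(k)-\Delta_t^\star(k)\bigr)+\bigl(\Psi_t(k)-\Psi_t^\star(k)\bigr)-\bigl(\Pi_t(k;b_t)-\Lambda_t^\star(k;b_t)\bigr)-\xi_t(k).
\]
Applying the triangle inequality reduces the claim to bounding four terms separately.

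The first two terms are handled directly by the component-wise approximation condition in Assumption~\ref{assump:score_approx}, which gives bounds of $\varepsilon_t^{\Delta}$ and $\varepsilon_t^{\Psi}$. For the budget term, I would insert the oracle linear shadow-cost $\lambda_t^{\star\top}g_t(k)$, described in Section~\ref{subsec:controller} as the quantity $\Pi_t$ proxies. This splits the term as
\[
\Pi_t-\Lambda_t^\star=(\Pi_t-\lambda_t^{\star\top}g_t(k))+(\lambda_t^{\star\top}g_t(k)-\Lambda_t^\star).
\]
The first piece is at most $\varepsilon_t^{\Pi}$ by Assumption~\ref{assump:score_approx}. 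The second piece is where the shadow-price homogeneity assumption enters. The local shadow cost $\Lambda_t^\star$ should be a first-order budget derivative evaluated at a nearby budget rather than exactly at $b_t$, or a state-dependent price. Homogeneity bounds the mismatch between these prices and $\lambda_t^\star$ by $\beta_t$, so Cauchy--Schwarz gives $\beta_t\|g_t(k)\|$.

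The remainder $\xi_t(k)$ is the second-order Taylor error of $b\mapsto \mathbb{E}[V_{t+1}(S_{t+1}^{(k)},b)]$ expanded around $b_t$ along $-g_t(k)$. If the budget smoothness assumption (Assumption~\ref{assump:budget_smooth}) says the budget gradient is $L_t$-Lipschitz, the standard descent-lemma inequality gives $|\xi_t(k)|\le \tfrac{L_t}{2}\|g_t(k)\|^2$. Summing the four bounds yields $\Gamma_t(k)$.

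The main obstacle is that the decomposition is only defined implicitly, so the split between $\Lambda_t^\star$ and $\xi_t$ must match the assumptions exactly. The budget term has to carry the first-order price error and the remainder has to carry the curvature error, with neither double-counted. I would first fix $\Lambda_t^\star$ as the linear term built from the gradient at $b_t$, so that $\xi_t$ is exactly the integral-form Taylor remainder. I would then check whether homogeneity is stated relative to $\lambda_t^\star$ or to the gradient itself, and adjust which piece absorbs $\beta_t$ accordingly.

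A secondary issue is that $b_t-g_t(k)$ might leave the region where smoothness holds. Feasibility of $k$, meaning $g_t(k)\le b_t$, should be exactly what guarantees the Taylor segment lies in the domain.
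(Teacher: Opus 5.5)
Your proposal is correct and follows essentially the paper's argument: the three component errors are bounded by Assumption~\ref{assump:score_approx}, and a budget linearization contributes $\beta_t\|g_t(k)\|$ from shadow-price homogeneity and $\tfrac{L_t}{2}\|g_t(k)\|^2$ from the almost-sure Taylor remainder, with $g_t(k)$ pulled out of the conditional expectation by its $(s_t,b_t)$-measurability (the first bound uses the dual-norm H\"older inequality rather than Cauchy--Schwarz, since $\|\cdot\|$ is a general norm). The only difference is bookkeeping: the paper defines $\Lambda_t^\star(k;b_t):=\lambda_t^{\star\top}g_t(k)$ exactly, so your inserted piece $\lambda_t^{\star\top}g_t(k)-\Lambda_t^\star(k;b_t)$ is zero and the price mismatch $\bigl(\lambda_t^\star-\mathbb{E}[\nabla_b V_{t+1}(S_{t+1}^{(k)},b_t)\mid s_t,b_t]\bigr)^\top g_t(k)$ sits inside $\xi_t(k)$ instead, which Lemma~\ref{lem:budget_linearization} bounds by $\beta_t\|g_t(k)\|+\tfrac{L_t}{2}\|g_t(k)\|^2$ (so $|\xi_t(k)|\le\tfrac{L_t}{2}\|g_t(k)\|^2$ alone would be false under the paper's definitions, though the total is unchanged, exactly the reallocation you anticipated).
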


Theorem~\ref{thm:controller_voi_local} covers the task-level action utility $u_t(k)$ and the normalized task-level VOI score $r_t(k)$ before guards. Corollary~\ref{cor:ranking_consistency} below gives a margin condition under which the normalized task-level VOI score ranking is preserved before guards, and Corollary~\ref{cor:value_gap} gives a one-step value-gap bound when the guard layer is compatible with the utility ranking. The controller should be reliable when the score clearly separates feasible actions, while failures are expected when approximation terms in $\Gamma_t(k)$ are large or guards intentionally override raw ranking.

For answer-time finalization, let $\mathcal{Z}$ be the range of $z$, let $Z:=z(\mathcal{T},a_{\mathrm{base}},a_{\mathrm{ref}})$, and let $\Delta_{\mathrm{fin}}:=R(a_{\mathrm{ref}},y)-R(a_{\mathrm{base}},y)$ be the score change from accepting the refined answer. With $(v)_+=\max\{v,0\}$, define
\[
G^\star(z):=\mathbb{E}[(\Delta_{\mathrm{fin}})_+\mid Z=z],
\qquad
H^\star(z):=\mathbb{E}[(-\Delta_{\mathrm{fin}})_+\mid Z=z]
\]
as oracle gain and harm. We consider policies $\pi:\mathcal{Z}\to\{0,1\}$, where $\pi(z)=1$ accepts $a_{\mathrm{ref}}$; safe policies satisfy $\pi(z)=0$ for $z\notin\mathcal{S}_{\mathrm{safe}}$ and the harm constraint in Eq.~\eqref{eq:two_stage_problem}.

\begin{theorem}[Optimal threshold form of safe answer replacement]
\label{thm:finalizer_threshold}
Under the strong-duality and multiplier-attainment condition in Assumption~\ref{assump:dual}, there exists $\eta^\star\ge 1$ such that the policy
\begin{equation}
\pi^\star(z)=\mathbf{1}\!\left\{z\in\mathcal{S}_{\mathrm{safe}}
\text{ and } G^\star(z)-\eta^\star H^\star(z)\ge 0\right\}
\label{eq:optimal_safe_writeback}
\end{equation}
maximizes $\mathbb{E}[R(\hat a,y)]$ subject to $\mathbb{E}[L_{\mathrm{harm}}(\hat a,a_{\mathrm{base}},y)]\le\rho_{\mathrm{harm}}$ and $\pi(z)=0$ for $z\notin\mathcal{S}_{\mathrm{safe}}$.
\end{theorem}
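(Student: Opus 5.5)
The plan is to reduce the harm-constrained selection problem to a pointwise Lagrangian maximization over $z$. First I rewrite both the objective and the constraint in terms of the acceptance policy $\pi$. Since $\hat a=a_{\mathrm{ref}}$ when $\pi(Z)=1$ and $\hat a=a_{\mathrm{base}}$ otherwise, we have $R(\hat a,y)=R(a_{\mathrm{base}},y)+\pi(Z)\Delta_{\mathrm{fin}}$. Likewise $L_{\mathrm{harm}}(\hat a,a_{\mathrm{base}},y)=\pi(Z)(-\Delta_{\mathrm{fin}})_+$. Conditioning on $Z$ and using the tower property, together with $\Delta_{\mathrm{fin}}=(\Delta_{\mathrm{fin}})_+-(-\Delta_{\mathrm{fin}})_+$, gives
\[
\mathbb{E}[R(\hat a,y)]=\mathbb{E}[R(a_{\mathrm{base}},y)]+\mathbb{E}\bigl[\pi(Z)\bigl(G^\star(Z)-H^\star(Z)\bigr)\bigr],
\qquad
\mathbb{E}[L_{\mathrm{harm}}]=\mathbb{E}[\pi(Z)H^\star(Z)].
\]
The first term of the objective does not depend on $\pi$. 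The problem is therefore to maximize the linear functional $\mathbb{E}[\pi(G^\star-H^\star)]$ over $\pi$ vanishing off $\mathcal{S}_{\mathrm{safe}}$, subject to the linear constraint $\mathbb{E}[\pi H^\star]\le\rho_{\mathrm{harm}}$.

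Next I form the Lagrangian with multiplier $\lambda\ge0$:
\[
\mathcal{L}(\pi,\lambda)=\mathbb{E}\bigl[\pi(Z)\bigl(G^\star(Z)-(1+\lambda)H^\star(Z)\bigr)\bigr]+\lambda\rho_{\mathrm{harm}}.
\]
For fixed $\lambda$, this is maximized pointwise by accepting exactly when $z\in\mathcal{S}_{\mathrm{safe}}$ and $G^\star(z)-(1+\lambda)H^\star(z)\ge0$. Ties are broken toward acceptance, which is harmless because the integrand is zero there. Setting $\eta^\star:=1+\lambda^\star$ gives $\eta^\star\ge1$ automatically and produces exactly the form in Eq.~\eqref{eq:optimal_safe_writeback}.

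Assumption~\ref{assump:dual} supplies strong duality and a multiplier $\lambda^\star$ attaining the dual minimum. To conclude optimality, I use the standard saddle-point argument. Write $\pi^\star$ for the pointwise maximizer at $\lambda^\star$, and let $\pi$ be any feasible safe policy. Then
\[
\mathrm{obj}(\pi)\le\mathcal{L}(\pi,\lambda^\star)\le\mathcal{L}(\pi^\star,\lambda^\star)=D(\lambda^\star)=P^\star .
\]
This shows $P^\star$ is an upper bound for every feasible policy.

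The main obstacle is showing that $\pi^\star$ itself is feasible and satisfies complementary slackness, $\lambda^\star(\mathbb{E}[\pi^\star H^\star]-\rho_{\mathrm{harm}})=0$. Only then does $\mathrm{obj}(\pi^\star)=P^\star$ hold. In general, deterministic threshold policies can fail this when the set $\{G^\star=\eta^\star H^\star\}$ has positive mass, since randomization may be needed to hit the constraint exactly. I would first try to read the multiplier-attainment clause of Assumption~\ref{assump:dual} as guaranteeing that the primal optimum is attained by a Lagrangian maximizer. If that reading is not available, I would argue via the tie-breaking freedom on the boundary set. Within the set $\{G^\star=\eta^\star H^\star\}$, any acceptance rule leaves $\mathcal{L}$ unchanged, and choosing the acceptance rule there (or noting that ties have measure zero) yields a maximizer meeting the constraint with slackness. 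If $\lambda^\star=0$, the constraint is slack and the argument is immediate with $\eta^\star=1$.
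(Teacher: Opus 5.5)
Your proposal is correct and follows essentially the same route as the paper: reduce to Eq.~\eqref{eq:fin_problem} via the reward--harm decomposition (Lemma~\ref{lem:reward_harm}), form the Lagrangian with $\eta=1+\gamma$, take the pointwise threshold maximizer, and set $\eta^\star=1+\gamma^\star$. Your first reading of Assumption~\ref{assump:dual} is the intended one. Clauses (iii)--(iv) directly posit that the tie-accepting threshold policy $\pi_{\gamma^\star}$ is primal-feasible and attains the constrained optimum, which is exactly how the paper closes the argument. Your tie-breaking fallback is therefore unnecessary; it would in general produce a policy not of the stated ``$\ge 0$'' form, and its $\lambda^\star=0$ case does not by itself give feasibility.
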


\subsection{Search-Time Controller: Definitions, Assumptions, and Proof}
\label{app:search_time_proofs}

Fix step $t$, frontier state $s_t$, remaining budget $b_t$, and feasible action set $\mathcal{K}_t^{\mathrm{feas}}$. For each feasible action $k$, let $S_{t+1}^{(k)}$ be the random next state after executing $k$ and let $g_t(k)$ be its budget charge, which is $(s_t,b_t)$-measurable. For token costs, this means using the conditional expectation of realized tokens given $(s_t,b_t,k)$ and absorbing residual variance into the remainder. Let $V_{t+1}(s,b)$ be the analysis-only oracle continuation value, i.e., the scalar maximum expected final reward from state $s$ and remaining budget $b$ under the oracle continuation policy and induced future-state/evaluation distribution. The oracle one-step lookahead value is
\begin{equation}
Q_t^\star(k):=\mathbb{E}\!\left[V_{t+1}\!\bigl(S_{t+1}^{(k)},\,b_t-g_t(k)\bigr)\mid s_t,b_t\right],
\label{eq:oracle_lookahead}
\end{equation}
and $k_t^{\mathrm{opt}}\in\arg\max_{k\in\mathcal{K}_t^{\mathrm{feas}}}Q_t^\star(k)$ denotes an oracle-optimal feasible action. Define
\begin{equation}
V_t^{\mathrm{stop}}(s_t,b_t):=\mathbb{E}\!\left[R(a_t^{\mathrm{stop}},y)\mid s_t,b_t\right],
\label{eq:stop_value}
\end{equation}
\begin{equation}
\Delta_t^\star(k):=\mathbb{E}\!\left[R(a_t^{\mathrm{stop},(k)},y)-R(a_t^{\mathrm{stop}},y)\mid s_t,b_t\right],
\label{eq:true_delta}
\end{equation}
\begin{equation}
\Psi_t^\star(k):=\mathbb{E}\!\left[V_{t+1}\!\bigl(S_{t+1}^{(k)},\,b_t\bigr)\mid s_t,b_t\right]-V_t^{\mathrm{stop}}(s_t,b_t)-\Delta_t^\star(k),
\label{eq:psi_star}
\end{equation}
where $a_t^{\mathrm{stop}}$ is the answer if the trajectory is finalized immediately and $a_t^{\mathrm{stop},(k)}$ is the answer after revealing $S_{t+1}^{(k)}$ and applying the fixed immediate-finalization backstop policy at the pre-charge budget $b_t$. The budget-adjusted oracle one-step gain is $\widetilde{Q}_t^\star(k):=Q_t^\star(k)-V_t^{\mathrm{stop}}(s_t,b_t)$.

We further define the remaining-budget shaped oracle utility
\begin{equation}
U_t^\star(k;b_t) := \Delta_t^\star(k) + \Psi_t^\star(k) - \Lambda_t^\star(k;b_t),
\label{eq:oracle_utility}
\end{equation}
where $\Lambda_t^\star(k;b_t):=\lambda_t^{\star\top}g_t(k)$ is the local oracle budget shadow-cost term. The implemented budget-penalty term $\Pi_t(k;b_t)$ may be signed, for example to encourage \answer{} near exhaustion; its deviation from this oracle shadow-cost is absorbed by $\varepsilon_t^\Pi$ in Assumption~\ref{assump:score_approx}. The remainder $\xi_t(k):=\widetilde{Q}_t^\star(k)-U_t^\star(k;b_t)$ is bounded by the smoothness constants below.

\begin{assumption}[Budget smoothness and cost measurability]
\label{assump:budget_smooth}
Fix a norm $\|\cdot\|$ on $\mathbb{R}^2$ and let $\|\cdot\|_\ast$ denote its dual norm. There exists $L_t\ge 0$ such that, for every feasible action $k\in\mathcal{K}_t^{\mathrm{feas}}$: (i) the map $b\mapsto V_{t+1}(S_{t+1}^{(k)},b)$ is almost surely twice continuously differentiable near $b_t$ with $L_t$-Lipschitz gradient, so the Taylor remainder
\begin{equation}
\varrho_t(k):=V_{t+1}\!\bigl(S_{t+1}^{(k)},\,b_t-g_t(k)\bigr)-V_{t+1}\!\bigl(S_{t+1}^{(k)},\,b_t\bigr)+\nabla_b V_{t+1}\!\bigl(S_{t+1}^{(k)},\,b_t\bigr)^\top g_t(k)
\label{eq:budget_smoothness}
\end{equation}
satisfies $|\varrho_t(k)|\le \frac{L_t}{2}\|g_t(k)\|^2$ almost surely; and (ii) $g_t(k)$ is $(s_t,b_t)$-measurable.
\end{assumption}

\begin{assumption}[Local shadow-price homogeneity]
\label{assump:shadow_price}
There exist $\lambda_t^\star\in\mathbb{R}_+^2$ and $\beta_t\ge 0$ such that, for every feasible action $k\in\mathcal{K}_t^{\mathrm{feas}}$,
\begin{equation}
\left\|\mathbb{E}\!\left[\nabla_b V_{t+1}\!\bigl(S_{t+1}^{(k)},\,b_t\bigr)\mid s_t,b_t\right]-\lambda_t^\star\right\|_\ast\le \beta_t.
\label{eq:shadow_price_homogeneity}
\end{equation}
\end{assumption}

\begin{assumption}[Component-wise approximation]
\label{assump:score_approx}
There exist $\varepsilon_t^{\Delta},\varepsilon_t^{\Psi},\varepsilon_t^{\Pi}\ge 0$ such that, for every $k\in\mathcal{K}_t^{\mathrm{feas}}$,
\begin{equation}
\bigl|\widehat{\Delta}_t(k)-\Delta_t^\star(k)\bigr|\le \varepsilon_t^{\Delta}, \quad
\bigl|\Psi_t(k)-\Psi_t^\star(k)\bigr|\le \varepsilon_t^{\Psi}, \quad
\bigl|\Pi_t(k;b_t)-\Lambda_t^\star(k;b_t)\bigr|\le \varepsilon_t^{\Pi}.
\label{eq:component_approx}
\end{equation}
\end{assumption}

\begin{assumption}[Ranking-preserving guard condition]
\label{assump:guard_compat}
The guard operator $\mathfrak{G}_t$ preserves the relevant $u_t$ ranking in the following sense: the implemented action $k_t\in\arg\max_{k\in\mathcal{K}_t^{\mathrm{feas}}}\widetilde{\mathcal{J}}_t(k)$ satisfies $u_t(k_t)\ge u_t(k_t^{\mathrm{opt}})$, where $k_t^{\mathrm{opt}}\in\arg\max_{k\in\mathcal{K}_t^{\mathrm{feas}}}Q_t^\star(k)$. Equivalently, the guard layer does not promote an action with strictly lower $u_t$ score above the oracle-optimal action.
\end{assumption}

\begin{lemma}[Budget linearization]
\label{lem:budget_linearization}
Under Assumptions~\ref{assump:budget_smooth} and~\ref{assump:shadow_price}, for every $k\in\mathcal{K}_t^{\mathrm{feas}}$,
\[
\widetilde{Q}_t^\star(k) = \Delta_t^\star(k)+\Psi_t^\star(k)-\lambda_t^{\star\top}g_t(k)+\xi_t(k),
\]
where $|\xi_t(k)|\le \beta_t\|g_t(k)\|+\frac{L_t}{2}\|g_t(k)\|^2$.
\end{lemma}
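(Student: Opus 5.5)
The plan is to unwind the definitions so that $\xi_t(k)$ becomes an explicit difference between an exact budget-shifted expectation and its first-order linearization. Then we bound the two error sources separately: the Taylor remainder using Assumption~\ref{assump:budget_smooth}, and the shadow-price mismatch using Assumption~\ref{assump:shadow_price}.

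\textbf{Step 1: rewrite $\widetilde{Q}_t^\star(k)$.} By Eq.~\eqref{eq:oracle_lookahead}, $\widetilde{Q}_t^\star(k)=\mathbb{E}[V_{t+1}(S_{t+1}^{(k)},b_t-g_t(k))\mid s_t,b_t]-V_t^{\mathrm{stop}}(s_t,b_t)$. Add and subtract $\mathbb{E}[V_{t+1}(S_{t+1}^{(k)},b_t)\mid s_t,b_t]$. Definition~\eqref{eq:psi_star} gives
\[
\mathbb{E}\!\left[V_{t+1}(S_{t+1}^{(k)},b_t)\mid s_t,b_t\right]-V_t^{\mathrm{stop}}(s_t,b_t)=\Delta_t^\star(k)+\Psi_t^\star(k).
\]
Hence
\[
\widetilde{Q}_t^\star(k)=\Delta_t^\star(k)+\Psi_t^\star(k)+\mathbb{E}\!\left[V_{t+1}(S_{t+1}^{(k)},b_t-g_t(k))-V_{t+1}(S_{t+1}^{(k)},b_t)\mid s_t,b_t\right].
\]
This identity holds exactly. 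It shows that $\xi_t(k)=\widetilde{Q}_t^\star(k)-U_t^\star(k;b_t)$ equals the conditional expectation above plus $\lambda_t^{\star\top}g_t(k)$.

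\textbf{Step 2: Taylor expansion.} By Eq.~\eqref{eq:budget_smoothness}, the integrand equals $-\nabla_b V_{t+1}(S_{t+1}^{(k)},b_t)^\top g_t(k)+\varrho_t(k)$. Part (ii) of Assumption~\ref{assump:budget_smooth} says $g_t(k)$ is $(s_t,b_t)$-measurable, so it can be pulled out of the conditional expectation. This gives
\[
\xi_t(k)=\bigl(\lambda_t^\star-\mathbb{E}[\nabla_b V_{t+1}(S_{t+1}^{(k)},b_t)\mid s_t,b_t]\bigr)^\top g_t(k)+\mathbb{E}[\varrho_t(k)\mid s_t,b_t].
\]

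\textbf{Step 3: bound each term.} For the first term, apply the dual-norm inequality $|w^\top g|\le\|w\|_\ast\|g\|$ together with Eq.~\eqref{eq:shadow_price_homogeneity}; this yields at most $\beta_t\|g_t(k)\|$. For the second term, the almost-sure bound $|\varrho_t(k)|\le\frac{L_t}{2}\|g_t(k)\|^2$ passes through the conditional expectation by monotonicity, again using measurability of $g_t(k)$. The triangle inequality then gives the claim.

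\textbf{Main obstacle.} The only delicate point is technical: justifying integrability of $\nabla_b V_{t+1}(S_{t+1}^{(k)},b_t)$ and the linearity step that pulls $g_t(k)$ outside the expectation. I would handle this by reading Assumption~\ref{assump:shadow_price} as implicitly asserting that the conditional expectation of the gradient exists and is finite. The a.s.\ remainder bound then makes $\varrho_t(k)$ integrable. Everything else is algebra.
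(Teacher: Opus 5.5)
Your proposal is correct and follows essentially the same route as the paper. Both arguments expand $V_{t+1}(S_{t+1}^{(k)},b_t-g_t(k))$ to first order via Assumption~\ref{assump:budget_smooth}, pull the measurable charge $g_t(k)$ out of the conditional expectation, use the definition of $\Psi_t^\star(k)$ to identify $\mathbb{E}[V_{t+1}(S_{t+1}^{(k)},b_t)\mid s_t,b_t]-V_t^{\mathrm{stop}}(s_t,b_t)$, and bound the shadow-price mismatch by the dual-norm (H\"older) inequality and the remainder by the almost-sure Taylor bound. The only difference is ordering, since you apply the $\Psi_t^\star$ identity before the Taylor step, and your integrability caveat is a sensible point that the paper leaves implicit.
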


\begin{proof}
Write $S:=S_{t+1}^{(k)}$, $g:=g_t(k)$, $V:=V_{t+1}$. By Assumption~\ref{assump:budget_smooth}(i),
\[
V(S,b_t-g)=V(S,b_t)-\nabla_b V(S,b_t)^\top g+\varrho_t(k), \quad |\varrho_t(k)|\le \tfrac{L_t}{2}\|g\|^2.
\]
Taking conditional expectation and using Assumption~\ref{assump:budget_smooth}(ii), so $g$ pulls out of the expectation,
\[
Q_t^\star(k)=\mathbb{E}[V(S,b_t)\mid s_t,b_t]-\mathbb{E}[\nabla_b V(S,b_t)\mid s_t,b_t]^\top g+\bar\varrho_t(k),
\]
where $|\bar\varrho_t(k)|\le \frac{L_t}{2}\|g\|^2$. By definition of $\Psi_t^\star(k)$, $\mathbb{E}[V(S,b_t)\mid s_t,b_t]=V_t^{\mathrm{stop}}+\Delta_t^\star(k)+\Psi_t^\star(k)$. Subtracting $V_t^{\mathrm{stop}}$ and writing $\xi_t(k):=(\lambda_t^\star-\mathbb{E}[\nabla_b V(S,b_t)\mid s_t,b_t])^\top g+\bar\varrho_t(k)$ gives the identity. The bound follows from H\"{o}lder's inequality and Assumption~\ref{assump:shadow_price}.
\end{proof}

\subsection{Proof of Theorem~\ref{thm:controller_voi_local} and Corollary~\ref{cor:value_gap}}
\label{app:search_time_main_proofs}

\begin{proof}[Proof of Theorem~\ref{thm:controller_voi_local}]
Fix $k\in\mathcal{K}_t^{\mathrm{feas}}$. By definition of $u_t(k)$ in Eq.~\eqref{eq:controller_score} and $U_t^\star(k;b_t)$ in Eq.~\eqref{eq:oracle_utility},
\[
u_t(k)-U_t^\star(k;b_t)
=\bigl(\widehat{\Delta}_t(k)-\Delta_t^\star(k)\bigr)+\bigl(\Psi_t(k)-\Psi_t^\star(k)\bigr)-\bigl(\Pi_t(k;b_t)-\Lambda_t^\star(k;b_t)\bigr).
\]
Applying the triangle inequality and Assumption~\ref{assump:score_approx} gives $|u_t(k)-U_t^\star(k;b_t)|\le \varepsilon_t^{\Delta}+\varepsilon_t^{\Psi}+\varepsilon_t^{\Pi}$. By Lemma~\ref{lem:budget_linearization}, $|\widetilde{Q}_t^\star(k)-U_t^\star(k;b_t)|=|\xi_t(k)|\le \beta_t\|g_t(k)\|+\frac{L_t}{2}\|g_t(k)\|^2$. The triangle inequality then gives
\[
|u_t(k)-\widetilde{Q}_t^\star(k)|\le \varepsilon_t^{\Delta}+\varepsilon_t^{\Psi}+\varepsilon_t^{\Pi}+\beta_t\|g_t(k)\|+\tfrac{L_t}{2}\|g_t(k)\|^2=\Gamma_t(k),
\]
which is Eq.~\eqref{eq:local_utility_bound}.
\end{proof}

Although Theorem~\ref{thm:controller_voi_local} is stated for the unnormalized utility, action selection uses the normalized score in Eq.~\eqref{eq:normalized_score}. The next corollary shows that the normalized ranking is stable when the oracle score margin dominates the approximation terms.

\begin{corollary}[Ranking consistency under a margin condition]
\label{cor:ranking_consistency}
If
\begin{equation}
r_t^\star(i)-r_t^\star(j)>\frac{\Gamma_t(i)+\Gamma_t(j)}{d_{\min}+\epsilon}
\label{eq:controller_regret_new}
\end{equation}
for some $d_{\min}>0$ lower-bounding all denominators, then $r_t(i)>r_t(j)$.
\end{corollary}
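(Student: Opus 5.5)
The plan is to compare the implemented normalized score $r_t(k)$ directly with its oracle counterpart $r_t^\star(k)=[U_t^\star(k;b_t)]_+/(d_t(k;b_t)+\epsilon)$, action by action. The comparison is clean because both scores share the same denominator $d_t(k;b_t)+\epsilon$. Once I have a per-action bound $|r_t(k)-r_t^\star(k)|\le \Gamma_t(k)/(d_{\min}+\epsilon)$, the margin condition~\eqref{eq:controller_regret_new} closes the argument by a two-sided triangle inequality.

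\textbf{Step 1: bound the numerators.} The first step is to bound $|u_t(k)-U_t^\star(k;b_t)|$ rather than $|u_t(k)-\widetilde{Q}_t^\star(k)|$, since $r_t^\star$ is built from $U_t^\star$ and not from $\widetilde{Q}_t^\star$. The first half of the proof of Theorem~\ref{thm:controller_voi_local} already gives
\[
|u_t(k)-U_t^\star(k;b_t)|\le \varepsilon_t^{\Delta}+\varepsilon_t^{\Psi}+\varepsilon_t^{\Pi}.
\]
This follows from Assumption~\ref{assump:score_approx} and the triangle inequality. Because $\beta_t,L_t\ge 0$, the right-hand side is at most $\Gamma_t(k)$.

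\textbf{Step 2: bound the normalized scores.} The positive-part map $v\mapsto[v]_+$ is $1$-Lipschitz, so $|[u_t(k)]_+-[U_t^\star(k;b_t)]_+|\le \Gamma_t(k)$. Dividing by the common positive denominator and using $d_t(k;b_t)\ge d_{\min}$ gives
\[
|r_t(k)-r_t^\star(k)|\le \frac{\Gamma_t(k)}{d_t(k;b_t)+\epsilon}\le \frac{\Gamma_t(k)}{d_{\min}+\epsilon}.
\]

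\textbf{Step 3: conclude.} Combining the bounds for $i$ and $j$ yields
\[
r_t(i)-r_t(j)\ge r_t^\star(i)-r_t^\star(j)-\frac{\Gamma_t(i)+\Gamma_t(j)}{d_{\min}+\epsilon}>0,
\]
where the strict inequality is exactly the margin hypothesis~\eqref{eq:controller_regret_new}.

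\textbf{Main obstacle.} The only delicate point is choosing the right reference quantity. If one naively invokes the statement of Theorem~\ref{thm:controller_voi_local}, which compares $u_t$ to $\widetilde{Q}_t^\star$, there is an unmatched remainder $\xi_t(k)$ between $\widetilde{Q}_t^\star$ and $U_t^\star$. I would avoid this by using the intermediate bound from the theorem's proof, which compares $u_t$ to $U_t^\star$ directly. Alternatively, one can note that the smoothness terms in $\Gamma_t(k)$ are slack and therefore absorb $\xi_t(k)$ harmlessly.

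It also matters that the denominators of $r_t$ and $r_t^\star$ are literally identical, with no estimation error between them. Otherwise an extra ratio-perturbation term would appear, and the bound would need a lower bound on $d_t(k;b_t)$ plus an upper bound on $|U_t^\star|$.
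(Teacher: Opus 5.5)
Your proof is correct and follows the paper's argument: compare $r_t(k)$ with $r_t^\star(k)$ over the shared denominator, use the $1$-Lipschitz positive part to get $|r_t(k)-r_t^\star(k)|\le\Gamma_t(k)/(d_{\min}+\epsilon)$, and close with the margin hypothesis; your Step~1 states explicitly the bound $|u_t(k)-U_t^\star(k;b_t)|\le\varepsilon_t^{\Delta}+\varepsilon_t^{\Psi}+\varepsilon_t^{\Pi}\le\Gamma_t(k)$, which the paper uses without spelling it out. Your ``alternative'' aside is loose, though: going through the statement of Theorem~\ref{thm:controller_voi_local} would only give $\Gamma_t(k)+|\xi_t(k)|$, not $\Gamma_t(k)$, so keep the direct comparison with $U_t^\star$.
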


\begin{proof}
Define $r_t^\star(k):=[U_t^\star(k;b_t)]_+/(d_t(k;b_t)+\epsilon)$. Since $[\cdot]_+$ is $1$-Lipschitz,
\[
|r_t(k)-r_t^\star(k)|
=
\frac{|[u_t(k)]_+-[U_t^\star(k;b_t)]_+|}{d_t(k;b_t)+\epsilon}
\le
\frac{|u_t(k)-U_t^\star(k;b_t)|}{d_{\min}+\epsilon}
\le
\frac{\Gamma_t(k)}{d_{\min}+\epsilon}.
\]
If $r_t^\star(i)-r_t^\star(j)>(\Gamma_t(i)+\Gamma_t(j))/(d_{\min}+\epsilon)$, then
\[
r_t(i)\ge r_t^\star(i)-\frac{\Gamma_t(i)}{d_{\min}+\epsilon}>r_t^\star(j)+\frac{\Gamma_t(j)}{d_{\min}+\epsilon}\ge r_t(j),
\]
which proves the result.
\end{proof}

The ranking statement does not yet include the guard layer. Under the ranking-preserving guard condition, the same local bound implies a one-step value-gap guarantee for the implemented action. This result is conditional: it isolates the loss due to score approximation when the guard layer does not overturn the utility ordering relevant to the oracle action, and it is not a global optimality guarantee for arbitrary guard settings.

\begin{corollary}[One-step value gap under ranking-preserving guards]
\label{cor:value_gap}
Under the conditions of Theorem~\ref{thm:controller_voi_local} and the ranking-preserving guard condition in Assumption~\ref{assump:guard_compat}, the implemented action satisfies
\begin{equation}
Q_t^\star(k_t^{\mathrm{opt}})-Q_t^\star(k_t)\le 2\max_{k\in\mathcal{K}_t^{\mathrm{feas}}}\Gamma_t(k).
\label{eq:value_gap}
\end{equation}
\end{corollary}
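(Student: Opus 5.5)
The argument is a standard sandwich: Theorem~\ref{thm:controller_voi_local} controls the error on both sides, and Assumption~\ref{assump:guard_compat} lets us compare the implemented action with the oracle-best one. Write $\Gamma_{\max}:=\max_{k\in\mathcal{K}_t^{\mathrm{feas}}}\Gamma_t(k)$. The first step is to pass from $Q_t^\star$ to $\widetilde{Q}_t^\star$. Since $\widetilde{Q}_t^\star(k)=Q_t^\star(k)-V_t^{\mathrm{stop}}(s_t,b_t)$ and the subtracted term does not depend on $k$,
\[
Q_t^\star(k_t^{\mathrm{opt}})-Q_t^\star(k_t)=\widetilde{Q}_t^\star(k_t^{\mathrm{opt}})-\widetilde{Q}_t^\star(k_t),
\]
so it suffices to bound the gap in the budget-adjusted oracle gain.

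Next, decompose this gap by inserting the implemented utility $u_t$:
\[
\widetilde{Q}_t^\star(k_t^{\mathrm{opt}})-\widetilde{Q}_t^\star(k_t)
=\bigl[\widetilde{Q}_t^\star(k_t^{\mathrm{opt}})-u_t(k_t^{\mathrm{opt}})\bigr]
+\bigl[u_t(k_t^{\mathrm{opt}})-u_t(k_t)\bigr]
+\bigl[u_t(k_t)-\widetilde{Q}_t^\star(k_t)\bigr].
\]
Both $k_t$ and $k_t^{\mathrm{opt}}$ lie in $\mathcal{K}_t^{\mathrm{feas}}$, since $k_t$ is an argmax over that set. Eq.~\eqref{eq:local_utility_bound} therefore bounds the first bracket by $\Gamma_t(k_t^{\mathrm{opt}})\le\Gamma_{\max}$ and the third by $\Gamma_t(k_t)\le\Gamma_{\max}$. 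The middle bracket is $\le 0$ by Assumption~\ref{assump:guard_compat}, which says exactly that $u_t(k_t)\ge u_t(k_t^{\mathrm{opt}})$. Summing the three brackets gives the claimed bound $2\Gamma_{\max}$; one could even state the slightly sharper bound $\Gamma_t(k_t)+\Gamma_t(k_t^{\mathrm{opt}})$.

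The only delicate point is making sure the argument works through the unnormalized utility $u_t$, not the normalized or guarded scores. Theorem~\ref{thm:controller_voi_local} is stated for $u_t$, and Assumption~\ref{assump:guard_compat} is phrased in terms of $u_t$. As a result, the clipping $[\cdot]_+$, the denominators $d_t$, and the guard operator $\mathfrak{G}_t$ never enter. The proof should say this explicitly, because the corollary would fail if the guards were allowed to promote an action with lower $u_t$. The remaining check is that $V_t^{\mathrm{stop}}$ depends only on $(s_t,b_t)$, so it cancels, which holds by Eq.~\eqref{eq:stop_value}.
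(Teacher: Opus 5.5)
Your proof is correct and follows the paper's argument: the paper also chains $\widetilde{Q}_t^\star(k_t^{\mathrm{opt}})\le u_t(k_t^{\mathrm{opt}})+\bar\Gamma_t\le u_t(k_t)+\bar\Gamma_t\le\widetilde{Q}_t^\star(k_t)+2\bar\Gamma_t$ using Eq.~\eqref{eq:local_utility_bound} and Assumption~\ref{assump:guard_compat}, and then adds back the action-independent $V_t^{\mathrm{stop}}$. Your three-bracket telescoping is the same sandwich, and the sharper bound $\Gamma_t(k_t)+\Gamma_t(k_t^{\mathrm{opt}})$ you note follows directly from it.
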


\begin{proof}[Proof of Corollary~\ref{cor:value_gap}]
Let $\bar\Gamma_t:=\max_{k\in\mathcal{K}_t^{\mathrm{feas}}}\Gamma_t(k)$. From Eq.~\eqref{eq:local_utility_bound}, $\widetilde{Q}_t^\star(k)\le u_t(k)+\bar\Gamma_t$ and $\widetilde{Q}_t^\star(k)\ge u_t(k)-\bar\Gamma_t$ for every $k$. By Assumption~\ref{assump:guard_compat}, $u_t(k_t)\ge u_t(k_t^{\mathrm{opt}})$. Therefore,
\[
\widetilde{Q}_t^\star(k_t^{\mathrm{opt}})\le u_t(k_t^{\mathrm{opt}})+\bar\Gamma_t\le u_t(k_t)+\bar\Gamma_t\le \widetilde{Q}_t^\star(k_t)+2\bar\Gamma_t.
\]
Adding $V_t^{\mathrm{stop}}$ to both sides gives $Q_t^\star(k_t^{\mathrm{opt}})-Q_t^\star(k_t)\le 2\bar\Gamma_t$, which is Eq.~\eqref{eq:value_gap}.
\end{proof}

\begin{remark}
The ranking-consistency result covers the normalized raw score $r_t(k)$ before guards. Guards further modify the executable score $\widetilde{\mathcal{J}}_t(k)$ in action-specific ways, so oracle ranking of $r_t$ does not directly imply oracle ranking of $\widetilde{\mathcal{J}}_t$.
\end{remark}

\subsection{Answer-Time Finalization: Reward--Harm Decomposition and Optimality}
\label{app:answer_time_proofs}

We now collect the answer-time quantities used in Theorem~\ref{thm:finalizer_threshold}. Let $(x,\mathcal{T},a_{\mathrm{base}},a_{\mathrm{ref}},y)$ be jointly distributed according to the evaluation distribution, where $a_{\mathrm{ref}}=g_{\mathrm{ref}}(\mathcal{T},a_{\mathrm{base}})$, let $\mathcal{Z}$ denote the range of $z$, let $Z:=z(\mathcal{T},a_{\mathrm{base}},a_{\mathrm{ref}})$, and let an answer-replacement policy be a measurable map $\pi:\mathcal{Z}\to\{0,1\}$. We restrict attention to safe policies satisfying $\pi(z)=0$ for $z\notin\mathcal{S}_{\mathrm{safe}}$. Define $\Delta_{\mathrm{fin}}:=R(a_{\mathrm{ref}},y)-R(a_{\mathrm{base}},y)$ and
\begin{equation}
G^\star(z):=\mathbb{E}[(\Delta_{\mathrm{fin}})_+\mid Z=z], \qquad H^\star(z):=\mathbb{E}[(-\Delta_{\mathrm{fin}})_+\mid Z=z].
\label{eq:true_gain_harm}
\end{equation}

We now turn from search-time action selection to answer-time answer replacement. The first step is to express finalization as a gain--harm trade-off relative to the base trajectory answer.

\begin{lemma}[Reward--harm decomposition]
\label{lem:reward_harm}
For any safe answer-replacement policy $\pi$,
\begin{align}
\mathbb{E}[R(\hat a,y)]-\mathbb{E}[R(a_{\mathrm{base}},y)]
&=\mathbb{E}\!\left[\pi(Z)\bigl(G^\star(Z)-H^\star(Z)\bigr)\right],
\label{eq:reward_decomp}\\
\mathbb{E}[L_{\mathrm{harm}}(\hat a,a_{\mathrm{base}},y)]
&=\mathbb{E}\!\left[\pi(Z)H^\star(Z)\right].
\label{eq:harm_decomp}
\end{align}
\end{lemma}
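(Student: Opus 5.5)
The plan is a pointwise computation: I will handle the realized quantities first and only then condition on $Z$. With $\pi$ a measurable safe policy, the final answer is $\hat a=a_{\mathrm{ref}}$ when $\pi(Z)=1$ and $\hat a=a_{\mathrm{base}}$ otherwise. This gives the identity
\[
R(\hat a,y)=R(a_{\mathrm{base}},y)+\pi(Z)\,\Delta_{\mathrm{fin}},
\]
which holds almost surely because $\pi(Z)\in\{0,1\}$.

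Next I split $\Delta_{\mathrm{fin}}$ into positive and negative parts, $\Delta_{\mathrm{fin}}=(\Delta_{\mathrm{fin}})_+-(-\Delta_{\mathrm{fin}})_+$. For the harm term, the definition gives
\[
L_{\mathrm{harm}}=\max\{0,R(a_{\mathrm{base}},y)-R(\hat a,y)\}=\max\{0,-\pi(Z)\Delta_{\mathrm{fin}}\}.
\]
Since $\pi(Z)\in\{0,1\}$ is nonnegative, it factors out of the maximum, so $L_{\mathrm{harm}}=\pi(Z)(-\Delta_{\mathrm{fin}})_+$ pointwise.

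Then I take expectations. Every quantity involved is bounded: $R\in[0,1]$, so $|\Delta_{\mathrm{fin}}|\le 1$ and its positive and negative parts also lie in $[0,1]$. This makes all expectations finite and lets me use the tower property. Conditioning on $Z$ and pulling out the $Z$-measurable factor $\pi(Z)$ gives
\[
\mathbb{E}[\pi(Z)(\Delta_{\mathrm{fin}})_+]=\mathbb{E}[\pi(Z)G^\star(Z)], \qquad \mathbb{E}[\pi(Z)(-\Delta_{\mathrm{fin}})_+]=\mathbb{E}[\pi(Z)H^\star(Z)].
\]
Substituting these into the two pointwise identities yields Eq.~\eqref{eq:reward_decomp} and Eq.~\eqref{eq:harm_decomp}.

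The only delicate step is making sure $\pi(Z)$ may legitimately be pulled out of the conditional expectation. This requires $\pi$ to be measurable and $Z$ to be a random variable, so that $G^\star$ and $H^\star$ are well-defined versions of the conditional expectations. Both hold by the setup, since $\pi$ is defined as a measurable map on $\mathcal{Z}$. The safety restriction $\pi(z)=0$ off $\mathcal{S}_{\mathrm{safe}}$ plays no role in the computation; it only restricts which policies the lemma is applied to.
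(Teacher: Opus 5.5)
Your proposal is correct and follows the paper's proof step for step: the pointwise identity $R(\hat a,y)-R(a_{\mathrm{base}},y)=\pi(Z)\Delta_{\mathrm{fin}}$, the split into positive and negative parts, the pointwise identity $L_{\mathrm{harm}}=\pi(Z)(-\Delta_{\mathrm{fin}})_+$, and then conditioning on $Z$. Your remarks on the boundedness of $R$ and the measurability of $\pi$ only make explicit the integrability and measurability details that the paper leaves implicit.
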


\begin{proof}
Fix a safe policy $\pi$. By definition, $\hat a=a_{\mathrm{ref}}$ if $\pi(Z)=1$ and $\hat a=a_{\mathrm{base}}$ if $\pi(Z)=0$, so $R(\hat a,y)-R(a_{\mathrm{base}},y)=\pi(Z)\Delta_{\mathrm{fin}}$. Taking expectations, conditioning on $Z$, and using $\Delta_{\mathrm{fin}}=(\Delta_{\mathrm{fin}})_+-(-\Delta_{\mathrm{fin}})_+$ yields Eq.~\eqref{eq:reward_decomp}. Likewise, $L_{\mathrm{harm}}(\hat a,a_{\mathrm{base}},y)=\pi(Z)(-\Delta_{\mathrm{fin}})_+$, and conditioning on $Z$ yields Eq.~\eqref{eq:harm_decomp}.
\end{proof}

By Lemma~\ref{lem:reward_harm}, maximizing the answer-time part of Eq.~\eqref{eq:two_stage_problem} is equivalent to
\begin{equation}
\max_{\pi}\; \mathbb{E}\!\left[\pi(Z)\bigl(G^\star(Z)-H^\star(Z)\bigr)\right]
\quad\text{s.t.}\quad
\mathbb{E}\!\left[\pi(Z)H^\star(Z)\right]\le \rho_{\mathrm{harm}},
\qquad
\pi(z)=0\ \text{for } z\notin\mathcal{S}_{\mathrm{safe}},
\label{eq:fin_problem}
\end{equation}
where the maximum is over measurable $\{0,1\}$-valued policies. For $\eta>0$ and $\tau\in\mathbb{R}$, write $F_{\eta,\tau}^\star(z):=G^\star(z)-\eta H^\star(z)-\tau$.

\begin{lemma}[Pointwise maximizer of the penalized objective]
\label{lem:pointwise_finalizer}
Fix $\eta>0$ and $\tau\in\mathbb{R}$. Among all measurable policies satisfying $\pi(z)=0$ for $z\notin\mathcal{S}_{\mathrm{safe}}$, the maximizer of $\mathbb{E}[\pi(Z)F_{\eta,\tau}^\star(Z)]$ is
\begin{equation}
\pi_{\eta,\tau}^\star(z)=\mathbf{1}\!\left\{z\in\mathcal{S}_{\mathrm{safe}} \text{ and } F_{\eta,\tau}^\star(z)\ge 0\right\}.
\label{eq:pointwise_threshold_rule}
\end{equation}
\end{lemma}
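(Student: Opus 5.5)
The plan is a pointwise comparison argument: for any admissible policy, we compare its integrand with that of $\pi_{\eta,\tau}^\star$ at each $z$ and then integrate. Two facts are needed first. Since $G^\star$ and $H^\star$ are conditional expectations of bounded quantities (recall $R\in[0,1]$, so $|\Delta_{\mathrm{fin}}|\le 1$), they can be chosen as measurable functions on $\mathcal{Z}$ taking values in $[0,1]$. Consequently $F_{\eta,\tau}^\star$ is measurable and bounded, and $\mathbb{E}[\pi(Z)F_{\eta,\tau}^\star(Z)]$ is finite for every measurable $\{0,1\}$-valued $\pi$.

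The candidate $\pi_{\eta,\tau}^\star$ is the indicator of the set
\[
A:=\mathcal{S}_{\mathrm{safe}}\cap\{F_{\eta,\tau}^\star\ge 0\}.
\]
We assume $\mathcal{S}_{\mathrm{safe}}$ is a measurable subset of $\mathcal{Z}$ (implicit in the statement). Then $A$ is measurable, so $\pi_{\eta,\tau}^\star$ is admissible: it is measurable, and it vanishes outside $\mathcal{S}_{\mathrm{safe}}$.

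Next, fix any admissible $\pi$ and show the pointwise inequality
\[
\pi(z)F_{\eta,\tau}^\star(z)\le \pi_{\eta,\tau}^\star(z)F_{\eta,\tau}^\star(z)\quad\text{for every } z\in\mathcal{Z}.
\]
This splits into three cases.
\begin{itemize}
\item If $z\notin\mathcal{S}_{\mathrm{safe}}$, both policies are $0$, so both sides are $0$.
\item If $z\in\mathcal{S}_{\mathrm{safe}}$ and $F_{\eta,\tau}^\star(z)\ge 0$, the right side equals $F_{\eta,\tau}^\star(z)$. The left side is either $0$ or $F_{\eta,\tau}^\star(z)$, and both are at most $F_{\eta,\tau}^\star(z)$.
\item If $z\in\mathcal{S}_{\mathrm{safe}}$ and $F_{\eta,\tau}^\star(z)<0$, the right side is $0$. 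The left side is either $0$ or negative.
\end{itemize}
Substituting $z=Z$ and taking expectations, which is justified by boundedness, gives $\mathbb{E}[\pi(Z)F_{\eta,\tau}^\star(Z)]\le\mathbb{E}[\pi_{\eta,\tau}^\star(Z)F_{\eta,\tau}^\star(Z)]$. Hence $\pi_{\eta,\tau}^\star$ attains the supremum.

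The only step needing care is stating the uniqueness sense correctly, since "the maximizer" is not literally unique. Any admissible policy that differs from $\pi_{\eta,\tau}^\star$ only on the set $\{z\in\mathcal{S}_{\mathrm{safe}}:F_{\eta,\tau}^\star(z)=0\}$, or on a $Z$-null set, attains the same value. Conversely, equality of the expectations forces the nonnegative difference of the integrands to vanish almost surely, which shows that every maximizer agrees with $\pi_{\eta,\tau}^\star$ almost surely off the tie set. I would state this explicitly, noting that the tie-breaking toward acceptance ($\ge$) is a convention. This is also the point Theorem~\ref{thm:finalizer_threshold} will rely on when it combines this lemma with the Lagrangian from Assumption~\ref{assump:dual}.
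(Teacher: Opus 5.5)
Your proof is correct and follows the same pointwise-separability argument as the paper: outside $\mathcal{S}_{\mathrm{safe}}$ the policy is forced to $0$, inside it you set $\pi(z)=1$ exactly when $F_{\eta,\tau}^\star(z)\ge 0$, and then you integrate. Your additions make explicit what the paper leaves implicit when it calls the rule ``a global maximizer'': boundedness and measurability of $G^\star,H^\star$, admissibility of the candidate policy, and the observation that maximizers are unique only up to the tie set $\{F_{\eta,\tau}^\star=0\}\cap\mathcal{S}_{\mathrm{safe}}$ and $Z$-null sets.
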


\begin{proof}
Fix $z\in\mathcal{S}_{\mathrm{safe}}$. Because $\pi(z)\in\{0,1\}$, the pointwise contribution $\pi(z)F_{\eta,\tau}^\star(z)$ equals either $0$ or $F_{\eta,\tau}^\star(z)$. Hence the maximizing choice is $\pi(z)=1$ when $F_{\eta,\tau}^\star(z)\ge 0$ and $\pi(z)=0$ otherwise. Outside $\mathcal{S}_{\mathrm{safe}}$, feasibility forces $\pi(z)=0$. Since the objective is the expectation of this pointwise-separable integrand, the resulting rule is a global maximizer.
\end{proof}

\begin{assumption}[Strong duality and multiplier attainment]
\label{assump:dual}
There exists a multiplier $\gamma^\star\ge 0$ such that: (i) strong duality holds---the constrained problem in Eq.~\eqref{eq:fin_problem} and its Lagrangian relaxation have the same optimal value; (ii) the Lagrangian relaxation attains its maximum at $\gamma^\star$; (iii) the policy $\pi_{\gamma^\star}(z):=\mathbf{1}\{z\in\mathcal{S}_{\mathrm{safe}}\text{ and }G^\star(z)-(1+\gamma^\star)H^\star(z)\ge 0\}$ is primal-feasible, i.e., $\mathbb{E}[\pi_{\gamma^\star}(Z)H^\star(Z)]\le \rho_{\mathrm{harm}}$; and (iv) $\pi_{\gamma^\star}$ attains the constrained optimum, i.e., $\pi_{\gamma^\star}$ achieves the maximum of Eq.~\eqref{eq:fin_problem}.
\end{assumption}
This is a population-level regularity condition for the oracle constrained selection problem; the released deterministic finalizer below is a conservative feature-rule instantiation and is not claimed to know $G^\star$ or $H^\star$.

\begin{proof}[Proof of Theorem~\ref{thm:finalizer_threshold}]
By Lemma~\ref{lem:reward_harm}, the safe answer-replacement problem is Eq.~\eqref{eq:fin_problem}. For any multiplier $\gamma\ge 0$, its Lagrangian is
\begin{equation}
\mathcal{L}(\pi,\gamma):=\mathbb{E}\!\left[\pi(Z)\bigl(G^\star(Z)-H^\star(Z)\bigr)\right]-\gamma\left(\mathbb{E}\!\left[\pi(Z)H^\star(Z)\right]-\rho_{\mathrm{harm}}\right)
\end{equation}
\begin{equation}
=\mathbb{E}\!\left[\pi(Z)\bigl(G^\star(Z)-(1+\gamma)H^\star(Z)\bigr)\right]+\gamma\rho_{\mathrm{harm}}.
\label{eq:lagrangian_finalizer}
\end{equation}
For fixed $\gamma$, the term $\gamma\rho_{\mathrm{harm}}$ is constant in $\pi$, so maximizing $\mathcal{L}(\pi,\gamma)$ is equivalent to maximizing $\mathbb{E}[\pi(Z)(G^\star(Z)-(1+\gamma)H^\star(Z))]$ subject to the safe-set constraint. Applying Lemma~\ref{lem:pointwise_finalizer} with $\eta=1+\gamma$ and $\tau=0$ shows that the pointwise maximizer is
\begin{equation}
\pi_\gamma(z)=\mathbf{1}\!\left\{z\in\mathcal{S}_{\mathrm{safe}} \text{ and } G^\star(z)-(1+\gamma)H^\star(z)\ge 0\right\}.
\label{eq:gamma_policy}
\end{equation}
By Assumption~\ref{assump:dual}(i)--(ii), there exists $\gamma^\star\ge 0$ at which the Lagrangian relaxation attains the constrained optimum. By Assumption~\ref{assump:dual}(iii), $\pi_{\gamma^\star}$ is primal-feasible. By Assumption~\ref{assump:dual}(iv), $\pi_{\gamma^\star}$ achieves the primal optimum. Setting $\eta^\star:=1+\gamma^\star$ gives $\eta^\star\ge 1$, and substituting into Eq.~\eqref{eq:gamma_policy} yields exactly the threshold rule in Eq.~\eqref{eq:optimal_safe_writeback}.
\end{proof}

\subsection{Released Rule-Based Finalizer: Exact Rule Characterization}
\label{app:det_finalizer_rule}

\begin{remark}
The released deterministic finalizer is \emph{not} a runtime instantiation of the oracle threshold rule in Theorem~\ref{thm:finalizer_threshold}: the oracle quantities $G^\star(z)$ and $H^\star(z)$ are unknown at inference time. Instead, it is a conservative deterministic rule over explicit feature conditions, constructed as a special case of the threshold form. Theorem~\ref{thm:finalizer_threshold} characterizes the performance upper envelope when $G^\star,H^\star$ are known; Proposition~\ref{prop:det_finalizer} characterizes the rule that the released code actually executes below that envelope. The empirical gap between the two is an open question.
\end{remark}

The oracle threshold rule uses unknown conditional quantities $G^\star$ and $H^\star$. The reported system therefore uses a deterministic conservative feature rule. The next proposition characterizes that rule at the paper level.

Define the following paper-level features. Let $m_{\mathrm{ref}}$ indicate that a refined candidate is available and passes the preliminary plausibility filter; let $c_{\mathrm{risk}}$ be the refinement-risk category; let $n_{\mathrm{dec}}$ be the number of decomposition steps in the trajectory; let $q_{\mathrm{type}}$ be the detected question/answer type; let $q_{\mathrm{slot}}$ be the detected slot type; let $\Delta_{\mathrm{sup}}$ be the support gain of the refined candidate over the base answer; and let $\ell_{\mathrm{base}}$ and $\ell_{\mathrm{ref}}$ be the token lengths of the base and refined candidates. Let $\mathcal{C}_{\mathrm{block}}$ denote the high-risk refinement categories blocked by the finalizer, let $\mathcal{Q}_{\mathrm{bin}}$ denote yes/no and binary-choice cases, and let $\mathcal{Q}_{\mathrm{typed}}$ denote typed-slot cases such as capacity, date, and year range. Define four branch condition sets:
\begin{align*}
\mathcal{B}_{\mathrm{bin}} &:= \{q_{\mathrm{type}}\in\mathcal{Q}_{\mathrm{bin}}\},\\
\mathcal{B}_{\mathrm{typed}} &:= \{q_{\mathrm{slot}}\in\mathcal{Q}_{\mathrm{typed}},\;\Delta_{\mathrm{sup}}\ge 0.50,\;\ell_{\mathrm{ref}}\le \ell_{\mathrm{base}}+1\},\\
\mathcal{B}_{\mathrm{explicit}} &:= \{\text{explicit factoid-slot case},\;\Delta_{\mathrm{sup}}\ge 0,\;\ell_{\mathrm{ref}}\le \ell_{\mathrm{base}}+3\},\\
\mathcal{B}_{\mathrm{compact}} &:= \{\Delta_{\mathrm{sup}}>0,\;\ell_{\mathrm{ref}}\le \ell_{\mathrm{base}}+2\}.
\end{align*}
Define the priority-ordered branch selector:
\[
\mathcal{B}_{\mathrm{accept}} :=
\begin{cases}
\mathcal{B}_{\mathrm{bin}}, & \text{if } q_{\mathrm{type}}\in\mathcal{Q}_{\mathrm{bin}},\\
\mathcal{B}_{\mathrm{typed}}, & \text{if } q_{\mathrm{type}}\notin\mathcal{Q}_{\mathrm{bin}} \text{ and } q_{\mathrm{slot}}\in\mathcal{Q}_{\mathrm{typed}},\\
\mathcal{B}_{\mathrm{explicit}}, & \text{if } q_{\mathrm{type}}\notin\mathcal{Q}_{\mathrm{bin}},\;q_{\mathrm{slot}}\notin\mathcal{Q}_{\mathrm{typed}},\;\text{and the case is an explicit factoid slot},\\
\mathcal{B}_{\mathrm{compact}}, & \text{otherwise.}
\end{cases}
\]
The accept set is not an unconditional union of the branches: when an earlier applicable branch fails its safety test, the finalizer abstains instead of falling through to a lower-priority branch. The accept set is
\[
\mathcal{S}_{\mathrm{det}}
:=
\{m_{\mathrm{ref}}=1,\; c_{\mathrm{risk}}\notin\mathcal{C}_{\mathrm{block}},\; n_{\mathrm{dec}}=0\}
\cap \mathcal{B}_{\mathrm{accept}} .
\]

\begin{proposition}[Exact rule of the released deterministic finalizer]
\label{prop:det_finalizer}
Let $z$ denote the finalization feature vector extracted from the completed trajectory and the two candidate answers. The deterministic finalizer used in our experiments is equivalent to the indicator rule $\sigma_{\mathrm{det}}(z)=\mathbf{1}\{z\in\mathcal{S}_{\mathrm{det}}\}$ for the priority-ordered safe set $\mathcal{S}_{\mathrm{det}}$. Therefore, the final answer is $a_{\mathrm{ref}}$ when $z\in\mathcal{S}_{\mathrm{det}}$ and $a_{\mathrm{base}}$ otherwise.
\end{proposition}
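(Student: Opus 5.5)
The proposition is a correspondence claim between the released code path and a set-membership indicator, so the argument is a direct case analysis rather than an optimization argument. I would first fix the finalizer as a deterministic map from $z=(m_{\mathrm{ref}},c_{\mathrm{risk}},n_{\mathrm{dec}},q_{\mathrm{type}},q_{\mathrm{slot}},\Delta_{\mathrm{sup}},\ell_{\mathrm{base}},\ell_{\mathrm{ref}},\text{explicit-factoid flag})$ to $\{0,1\}$. I would write its control flow as a sequence of early-return checks. (a) Abstain if no refined candidate passes the plausibility filter ($m_{\mathrm{ref}}=0$). (b) Abstain if $c_{\mathrm{risk}}\in\mathcal{C}_{\mathrm{block}}$. (c) Abstain if $n_{\mathrm{dec}}>0$. (d) Otherwise dispatch on question type in priority order, binary, then typed slot, then explicit factoid, then fallback. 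Each branch returns its own acceptance test and does not fall through. Since every feature is a deterministic function of $(\mathcal{T},a_{\mathrm{base}},a_{\mathrm{ref}})$ and the selector makes no LLM or tool call, the output depends on $z$ alone.

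The core step is to show that the set of $z$ on which this control flow returns $a_{\mathrm{ref}}$ is exactly $\mathcal{S}_{\mathrm{det}}$. For the gate, the early returns (a)--(c) reject precisely the complement of $\{m_{\mathrm{ref}}=1,\,c_{\mathrm{risk}}\notin\mathcal{C}_{\mathrm{block}},\,n_{\mathrm{dec}}=0\}$. Because the gate checks are independent of order, their conjunction equals the first factor of the intersection. For the dispatch, I would partition the gated region into four disjoint cells determined by the selector conditions. These are $q_{\mathrm{type}}\in\mathcal{Q}_{\mathrm{bin}}$; then $q_{\mathrm{type}}\notin\mathcal{Q}_{\mathrm{bin}}$ with $q_{\mathrm{slot}}\in\mathcal{Q}_{\mathrm{typed}}$; then the explicit-factoid cell; then the remainder. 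On each cell I would verify that the branch returns $1$ iff the corresponding $\mathcal{B}$ holds:
\begin{itemize}
\item the binary branch accepts unconditionally, so the test is trivially $\mathcal{B}_{\mathrm{bin}}$;
\item the typed branch tests $\Delta_{\mathrm{sup}}\ge 0.50$ and $\ell_{\mathrm{ref}}\le\ell_{\mathrm{base}}+1$;
\item the explicit branch tests $\Delta_{\mathrm{sup}}\ge0$ and $\ell_{\mathrm{ref}}\le\ell_{\mathrm{base}}+3$;
\item the compact fallback tests $\Delta_{\mathrm{sup}}>0$ and $\ell_{\mathrm{ref}}\le\ell_{\mathrm{base}}+2$.
\end{itemize}
Taking the disjoint union over cells gives the accept region equal to the gate intersected with $\mathcal{B}_{\mathrm{accept}}$, which is $\mathcal{S}_{\mathrm{det}}$. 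The final sentence of the proposition then follows by the definition of $\sigma$ selecting $a_{\mathrm{ref}}$ on $\sigma=1$.

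The main obstacle is the no-fall-through property. A failing earlier branch must yield abstention rather than being re-tested by a lower-priority branch. Otherwise the accept set would be a union such as $\mathcal{B}_{\mathrm{typed}}\cup\mathcal{B}_{\mathrm{compact}}$, which strictly contains the priority-ordered set. A typed-slot case with $0<\Delta_{\mathrm{sup}}<0.5$ is an example that the union would accept but the priority-ordered rule rejects. I would handle this by exhibiting, for each non-final branch, that its test is followed by an unconditional return. That is a syntactic property of the released code, so the claim should be read as conditional on that implementation. A secondary check is that the four selector cells are mutually exclusive and exhaustive, which holds by construction since each cell negates the earlier conditions.
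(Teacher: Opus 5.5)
Your proposal is correct and follows essentially the same route as the paper's proof. Both trace the released rule in evaluation order, identify the three early-return gates with $\{m_{\mathrm{ref}}=1,\ c_{\mathrm{risk}}\notin\mathcal{C}_{\mathrm{block}},\ n_{\mathrm{dec}}=0\}$, and check that each priority-ordered branch accepts exactly on its $\mathcal{B}$-condition and abstains on failure rather than falling through. Your explicit partition into disjoint selector cells and your typed-slot example with $0<\Delta_{\mathrm{sup}}<0.5$ (which shows why fall-through would enlarge the accept set) make the paper's prose trace slightly sharper. One wording slip: the literal claim that $\mathcal{B}_{\mathrm{typed}}\cup\mathcal{B}_{\mathrm{compact}}$ ``strictly contains'' $\mathcal{S}_{\mathrm{det}}$ is false, because gated binary cases, which are accepted unconditionally, need not lie in that union.
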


\begin{proof}
Trace the rule in its evaluation order. The first gate requires that a refined candidate is available and passes the preliminary plausibility filter, which enforces $m_{\mathrm{ref}}=1$. The second gate blocks high-risk refinement categories, enforcing $c_{\mathrm{risk}}\notin\mathcal{C}_{\mathrm{block}}$. The third gate abstains on decomposition-heavy trajectories, enforcing $n_{\mathrm{dec}}=0$. Conditional on these gates, the branch selector is evaluated in priority order. Binary and yes/no cases are handled first, giving the branch set $\mathcal{B}_{\mathrm{bin}}$. If the case is not binary but has a typed slot, the typed-slot safety test is applied; acceptance requires $\Delta_{\mathrm{sup}}\ge 0.50$ and $\ell_{\mathrm{ref}}\le \ell_{\mathrm{base}}+1$, and failure causes abstention rather than falling through. If no earlier branch applies and the case is an explicit factoid slot, acceptance requires nonnegative support gain and $\ell_{\mathrm{ref}}\le \ell_{\mathrm{base}}+3$; again, failure causes abstention. The final fallback accepts only compact support-improving refinements, requiring $\Delta_{\mathrm{sup}}>0$ and $\ell_{\mathrm{ref}}\le \ell_{\mathrm{base}}+2$. Thus the rule accepts exactly when the three gates hold and the priority-ordered branch selector accepts, which is precisely $z\in\mathcal{S}_{\mathrm{det}}$.
\end{proof}

\subsection{Plug-in Approximation of the Finalizer}
\label{app:plugin_finalizer}

The previous proposition describes the released rule. For completeness, we also record a generic perturbation result showing how approximate gain--harm scores affect threshold decisions: Lemma~\ref{lem:fin_score_perturb} controls the pointwise score error, and Theorem~\ref{thm:plugin_excess} converts it into a penalized-value bound.

Fix $\eta>0$ and $\tau\in\mathbb{R}$. Write $F_{\eta,\tau}^\star(z):=G^\star(z)-\eta H^\star(z)-\tau$ and $F_{\eta,\tau}(z):=G(z)-\eta H(z)-\tau$, define the oracle rule by $\pi_{\eta,\tau}^\star(z):=\mathbf{1}\{z\in\mathcal{S}_{\mathrm{safe}} \text{ and } F_{\eta,\tau}^\star(z)\ge 0\}$, and define the plug-in rule by $\hat\pi_{\eta,\tau}(z):=\mathbf{1}\{z\in\mathcal{S}_{\mathrm{safe}} \text{ and } F_{\eta,\tau}(z)\ge 0\}$.

\begin{lemma}[Finalizer score perturbation]
\label{lem:fin_score_perturb}
If $|G(z)-G^\star(z)|\le \delta_G$ and $|H(z)-H^\star(z)|\le \delta_H$ for all $z\in\mathcal{S}_{\mathrm{safe}}$, then
\[
|F_{\eta,\tau}(z)-F_{\eta,\tau}^\star(z)|\le \delta_G+\eta\,\delta_H
\]
for all $z\in\mathcal{S}_{\mathrm{safe}}$.
\end{lemma}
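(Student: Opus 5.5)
The bound follows directly from the definitions. Fix $z\in\mathcal{S}_{\mathrm{safe}}$. The two scores share the same $\eta$ and $\tau$, so $\tau$ cancels in the difference:
\[
F_{\eta,\tau}(z)-F_{\eta,\tau}^\star(z)=\bigl(G(z)-G^\star(z)\bigr)-\eta\bigl(H(z)-H^\star(z)\bigr).
\]

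Next apply the triangle inequality to this difference. Since $\eta>0$, we have $|\eta(H(z)-H^\star(z))|=\eta\,|H(z)-H^\star(z)|$, which gives
\[
|F_{\eta,\tau}(z)-F_{\eta,\tau}^\star(z)|\le |G(z)-G^\star(z)|+\eta\,|H(z)-H^\star(z)|.
\]
The hypotheses bound the two terms by $\delta_G$ and $\delta_H$, respectively, so the right-hand side is at most $\delta_G+\eta\,\delta_H$. Because $z\in\mathcal{S}_{\mathrm{safe}}$ was arbitrary, the bound holds uniformly on $\mathcal{S}_{\mathrm{safe}}$.

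There is no real obstacle here. The only points to check are that $\tau$ is shared by both scores, so it cancels exactly, and that $\eta>0$, so no absolute value on $\eta$ is needed. Outside $\mathcal{S}_{\mathrm{safe}}$ the claim is neither asserted nor needed, since both the oracle rule and the plug-in rule abstain there.
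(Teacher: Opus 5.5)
Your proposal is correct and matches the paper's proof. Both write $F_{\eta,\tau}(z)-F_{\eta,\tau}^\star(z)=\bigl(G(z)-G^\star(z)\bigr)-\eta\bigl(H(z)-H^\star(z)\bigr)$ for $z\in\mathcal{S}_{\mathrm{safe}}$, with the shared $\tau$ cancelling, and then apply the triangle inequality using $\eta>0$.
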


\begin{proof}
For $z\in\mathcal{S}_{\mathrm{safe}}$, write $F_{\eta,\tau}(z)-F_{\eta,\tau}^\star(z)=\bigl(G(z)-G^\star(z)\bigr)-\eta\bigl(H(z)-H^\star(z)\bigr)$ and apply the triangle inequality.
\end{proof}

\begin{theorem}[Plug-in excess penalized value]
\label{thm:plugin_excess}
Fix $\eta>0$ and $\tau\in\mathbb{R}$, and let $\delta:=\delta_G+\eta\,\delta_H$. Under the bounds of Lemma~\ref{lem:fin_score_perturb},
\[
\mathbb{E}\!\left[\pi_{\eta,\tau}^\star(Z)F_{\eta,\tau}^\star(Z)\right]-\mathbb{E}\!\left[\hat\pi_{\eta,\tau}(Z)F_{\eta,\tau}^\star(Z)\right]
\le
\mathbb{E}\!\left[|F_{\eta,\tau}^\star(Z)|\mathbf{1}\!\left\{|F_{\eta,\tau}^\star(Z)|\le \delta,\; Z\in\mathcal{S}_{\mathrm{safe}}\right\}\right].
\]
If, in addition, there exist constants $C>0$ and $\alpha>0$ such that
\[
\mathbb{P}\!\left(|F_{\eta,\tau}^\star(Z)|\le u,\; Z\in\mathcal{S}_{\mathrm{safe}}\right)\le C u^\alpha \qquad \text{for all } u\ge 0,
\]
then
\[
\mathbb{E}\!\left[\pi_{\eta,\tau}^\star(Z)F_{\eta,\tau}^\star(Z)\right]-\mathbb{E}\!\left[\hat\pi_{\eta,\tau}(Z)F_{\eta,\tau}^\star(Z)\right]\le C\delta^{1+\alpha}.
\]
\end{theorem}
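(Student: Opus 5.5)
The plan is the standard plug-in classification argument. Set $F^\star:=F_{\eta,\tau}^\star$, $F:=F_{\eta,\tau}$, $\pi^\star:=\pi_{\eta,\tau}^\star$ and $\hat\pi:=\hat\pi_{\eta,\tau}$. Both policies vanish outside $\mathcal{S}_{\mathrm{safe}}$, so the excess is
\[
\mathbb{E}\!\left[(\pi^\star(Z)-\hat\pi(Z))F^\star(Z)\mathbf{1}\{Z\in\mathcal{S}_{\mathrm{safe}}\}\right].
\]
By Lemma~\ref{lem:pointwise_finalizer}, $\pi^\star$ maximizes the integrand pointwise. Hence $(\pi^\star(z)-\hat\pi(z))F^\star(z)\ge 0$ for every $z$. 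This quantity is nonzero only where the two rules disagree. On a disagreement it equals $|F^\star(z)|$: if $\pi^\star=1,\hat\pi=0$ then $F^\star\ge 0$, and if $\pi^\star=0,\hat\pi=1$ then $F^\star<0$. So the excess equals $\mathbb{E}[|F^\star(Z)|\mathbf{1}\{\pi^\star(Z)\neq\hat\pi(Z),\,Z\in\mathcal{S}_{\mathrm{safe}}\}]$.

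The key step is to show that disagreement inside $\mathcal{S}_{\mathrm{safe}}$ forces $|F^\star(z)|\le\delta$. Disagreement means $F^\star(z)$ and $F(z)$ lie on opposite sides of $0$, with one of them $\ge 0$ and the other $<0$. In either case $|F^\star(z)|\le |F^\star(z)-F(z)|$. Lemma~\ref{lem:fin_score_perturb} bounds the right-hand side by $\delta_G+\eta\delta_H=\delta$. This gives the disagreement indicator $\le\mathbf{1}\{|F^\star(Z)|\le\delta,\,Z\in\mathcal{S}_{\mathrm{safe}}\}$, which yields the first inequality.

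For the margin version, bound $|F^\star(Z)|\le\delta$ inside the indicator. Then the excess is at most $\delta\,\mathbb{P}(|F^\star(Z)|\le\delta,\,Z\in\mathcal{S}_{\mathrm{safe}})\le \delta\cdot C\delta^\alpha=C\delta^{1+\alpha}$. No layer-cake integration is needed, since the crude bound already gives the stated rate.

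The only delicate point is the boundary case at $F=0$ or $F^\star=0$, which comes from the $\ge$ convention in both rules. Checking the two sign cases explicitly handles it: if $F^\star\ge 0>F$, then $F^\star\le F^\star-F$; if $F\ge 0>F^\star$, then $-F^\star\le F-F^\star$. Measurability of the events follows from measurability of $G,H,G^\star,H^\star$.
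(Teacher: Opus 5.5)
Your proposal is correct and takes essentially the same route as the paper: restrict to the disagreement set $D$ inside $\mathcal{S}_{\mathrm{safe}}$, use pointwise optimality of $\pi_{\eta,\tau}^\star$ to identify the excess with $\mathbb{E}[|F_{\eta,\tau}^\star(Z)|\mathbf{1}_D]$, combine the opposite-sign property with Lemma~\ref{lem:fin_score_perturb} to get $|F_{\eta,\tau}^\star|\le\delta$ on $D$, and finish with the bound $\delta\,\mathbb{P}(\cdot)\le C\delta^{1+\alpha}$. Your explicit check of the two sign cases under the $\ge 0$ tie convention makes precise what the paper covers in a single ``opposite signs'' remark.
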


\begin{proof}
Let $D:=\{Z\in\mathcal{S}_{\mathrm{safe}}:\pi_{\eta,\tau}^\star(Z)\neq \hat\pi_{\eta,\tau}(Z)\}$. Because $\pi_{\eta,\tau}^\star$ is the pointwise maximizer of $\pi(Z)F_{\eta,\tau}^\star(Z)$ over $\pi(Z)\in\{0,1\}$, the difference
\[
\mathbb{E}\!\left[\pi_{\eta,\tau}^\star(Z)F_{\eta,\tau}^\star(Z)\right]-\mathbb{E}\!\left[\hat\pi_{\eta,\tau}(Z)F_{\eta,\tau}^\star(Z)\right]
\]
is supported on $D$ and equals $\mathbb{E}[|F_{\eta,\tau}^\star(Z)|\mathbf{1}_D]$. On $D$, the oracle and plug-in scores have opposite signs, so $|F_{\eta,\tau}^\star(Z)|\le |F_{\eta,\tau}^\star(Z)-F_{\eta,\tau}(Z)|$. Applying Lemma~\ref{lem:fin_score_perturb} gives $D\subseteq\{|F_{\eta,\tau}^\star(Z)|\le \delta,\; Z\in\mathcal{S}_{\mathrm{safe}}\}$ and therefore the first bound. For the second bound, use $|F_{\eta,\tau}^\star(Z)|\le \delta$ on the boundary set to obtain
\[
\mathbb{E}\!\left[|F_{\eta,\tau}^\star(Z)|\mathbf{1}\!\left\{|F_{\eta,\tau}^\star(Z)|\le \delta,\; Z\in\mathcal{S}_{\mathrm{safe}}\right\}\right]
\le
\delta\,\mathbb{P}\!\left(|F_{\eta,\tau}^\star(Z)|\le \delta,\; Z\in\mathcal{S}_{\mathrm{safe}}\right)\le C\delta^{1+\alpha}.
\]
\end{proof}

\subsection{Finite Termination of Algorithm~\ref{alg:two_stage_inference}}
\label{app:termination}

Finally, Corollary~\ref{cor:termination} records that the inference procedure is finite under a minimal budget-decrement condition. The sum $B_{\mathrm{tool}}+B_{\mathrm{tok}}$ is used only as a bookkeeping potential over two nonnegative budget coordinates, not as a single physical cost metric.

\begin{corollary}[Finite termination]
\label{cor:termination}
Algorithm~\ref{alg:two_stage_inference} terminates in at most $\lceil(B_{\mathrm{tool}}+B_{\mathrm{tok}})/\zeta\rceil$ iterations under the minimum-decrement condition below.
\end{corollary}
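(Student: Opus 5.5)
The plan is a potential-function argument. We have to state the minimum-decrement condition, since the corollary refers to it "below": there is a constant $\zeta>0$ such that every executed iteration of the while-loop in Algorithm~\ref{alg:two_stage_inference} debits realized costs $c_t=(c_{\mathrm{tool},t},c_{\mathrm{tok},t})\in\mathbb{R}_{\ge0}^2$ with $c_{\mathrm{tool},t}+c_{\mathrm{tok},t}\ge\zeta$. Every operation either issues a tool call or emits at least some output tokens, so this holds, for example, with $\zeta=1$ when costs are integer counts. We also assume the remaining budget is updated as $b_{t+1}=b_t-c_t$. The loop guard treats $b_t$ as exhausted as soon as either coordinate would become negative, or once the sum reaches zero; this is consistent with the hard audit.

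Define the bookkeeping potential $\Phi_t:=b_{\mathrm{tool},t}+b_{\mathrm{tok},t}$. It starts at $\Phi_0=B_{\mathrm{tool}}+B_{\mathrm{tok}}$.

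\emph{Step 1.} While the loop is running, $b_t$ is not exhausted, so $b_t\ge 0$ coordinatewise and $\Phi_t\ge 0$.

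\emph{Step 2.} By the decrement condition, each iteration gives $\Phi_{t+1}\le\Phi_t-\zeta$. By induction, after $n$ iterations $\Phi_n\le \Phi_0-n\zeta$.

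\emph{Step 3.} Suppose the loop executed more than $N:=\lceil\Phi_0/\zeta\rceil$ iterations. Then at the start of iteration $N+1$ the guard must have passed, which requires $\Phi_N\ge0$, and if $\Phi_0/\zeta$ is an integer, strictly $\Phi_N>0$. But $\Phi_N\le\Phi_0-N\zeta\le 0$. This contradicts the guard (using the convention that a zero potential counts as exhausted), so the loop runs at most $N$ times.

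\emph{Step 4.} The loop can also exit earlier, when the frontier empties or the termination condition fires. These exits only shorten the run. The post-loop part of the algorithm (extracting $a_{\mathrm{base}}$, building $a_{\mathrm{ref}}$, and applying the selector of Proposition~\ref{prop:det_finalizer}) is a fixed finite computation with no further iterations. This yields the stated bound.

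The main obstacle is the boundary case in Step 3: the bound is clean only if a zero potential, or any negative coordinate, counts as "exhausted". We therefore fix the exhaustion convention explicitly: $b_t$ is exhausted iff $\min(b_{\mathrm{tool},t},b_{\mathrm{tok},t})<0$ or $\Phi_t\le 0$. We also note that realized token costs are debited after execution, so the last iteration may overshoot the budget. Overshoot does not affect the iteration count, because the argument bounds iterations started, not the final potential. A second subtlety is that a single action must not be a free no-op. If some operation could have zero cost, for example an \answer{} that emits no budgeted tokens, the decrement condition excludes it; alternatively, we note that \answer{} ends the trajectory and so triggers termination directly.
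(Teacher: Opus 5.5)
Your proposal is correct and follows the paper's proof: both use the bookkeeping potential $b_{\mathrm{tool},t}+b_{\mathrm{tok},t}$, show that it drops by at least $\zeta$ on every continuing iteration while staying nonnegative, and read off the $\lceil(B_{\mathrm{tool}}+B_{\mathrm{tok}})/\zeta\rceil$ bound. Your version differs only in being slightly more careful and slightly more general: you bound iterations started through an explicit exhaustion convention, where the paper simply assumes $S_t\ge 0$ for every $t$ and so sidesteps post-execution overshoot, and your sum-of-costs decrement hypothesis is implied by the paper's per-coordinate one.
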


\begin{lemma}[Finite termination lemma]
\label{lem:termination}
Assume that every executed action has nonnegative cost in each budget coordinate and that, whenever the \texttt{while}-loop in Algorithm~\ref{alg:two_stage_inference} continues, at least one coordinate of the remaining budget decreases by at least a fixed amount $\zeta>0$. If the initial budget $b_0=(B_{\mathrm{tool}},B_{\mathrm{tok}})$ is finite, then the \texttt{while}-loop terminates after at most $\left\lceil \frac{B_{\mathrm{tool}}+B_{\mathrm{tok}}}{\zeta}\right\rceil$ iterations. This proves Corollary~\ref{cor:termination}.
\end{lemma}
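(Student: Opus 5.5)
The argument is a potential-function count. Define $\Phi_t := b_{\mathrm{tool},t}+b_{\mathrm{tok},t}$, the sum of the two remaining-budget coordinates at the start of iteration $t$ of the \texttt{while}-loop in Algorithm~\ref{alg:two_stage_inference}. This sum is used only as bookkeeping, as the paragraph before Corollary~\ref{cor:termination} indicates. Initially $\Phi_0=B_{\mathrm{tool}}+B_{\mathrm{tok}}<\infty$ because the initial budget is finite. The plan has three steps: show that $\Phi$ strictly decreases by at least $\zeta$ on every continuing iteration, show that $\Phi$ cannot fall far below zero while the loop is still running, and then count iterations.

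For the decrease, each executed action debits its realized costs from $b_t$, and these costs are nonnegative in each coordinate by assumption. Hence neither coordinate increases, and $\Phi_{t+1}\le\Phi_t$. By the minimum-decrement hypothesis, whenever the loop continues, some coordinate drops by at least $\zeta$. Since the other coordinate does not increase, $\Phi_{t+1}\le \Phi_t-\zeta$. By induction, after $n$ completed iterations,
\[
\Phi_n\le B_{\mathrm{tool}}+B_{\mathrm{tok}}-n\zeta .
\]

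The bound on how low $\Phi$ can go is where some care is needed. The loop guard requires that $b_t$ is not exhausted, which I will read as both coordinates being nonnegative, so that $\Phi_t\ge 0$ at the start of every iteration that actually executes. Before the $n$-th iteration begins we therefore have $0\le \Phi_{n-1}\le B_{\mathrm{tool}}+B_{\mathrm{tok}}-(n-1)\zeta$. This forces $n-1\le (B_{\mathrm{tool}}+B_{\mathrm{tok}})/\zeta$.

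The main obstacle is matching this count exactly to the stated ceiling $\lceil (B_{\mathrm{tool}}+B_{\mathrm{tok}})/\zeta\rceil$. The crude argument above gives $n\le \lfloor (B_{\mathrm{tool}}+B_{\mathrm{tok}})/\zeta\rfloor+1$, which can be one more than the ceiling. To close this gap I would read ``exhausted'' as some coordinate being nonpositive. Then an iteration runs only if $\Phi_t>0$, so $(n-1)\zeta<B_{\mathrm{tool}}+B_{\mathrm{tok}}$, which gives $n\le\lceil (B_{\mathrm{tool}}+B_{\mathrm{tok}})/\zeta\rceil$. If the stricter guard cannot be justified from the algorithm as written, I would state the bound with this convention made explicit. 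The other termination triggers, namely an empty frontier or the fired termination condition, can only stop the loop earlier, so they do not affect the upper bound. The post-loop finalization steps run a fixed, finite number of times, so the whole algorithm terminates, which proves Corollary~\ref{cor:termination}.
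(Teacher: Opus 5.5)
Your proposal is correct and is essentially the paper's proof: it uses the same bookkeeping potential $b_{\mathrm{tool},t}+b_{\mathrm{tok},t}$, the same decrease of at least $\zeta$ per iteration (from nonnegative costs plus the minimum-decrement hypothesis), and the same count against a nonnegativity floor. The only difference is the off-by-one you flag: the paper simply asserts $S_t\ge 0$ for all $t$, including after the final debit, which gives $n\zeta\le B_{\mathrm{tool}}+B_{\mathrm{tok}}$ at once, whereas you get nonnegativity only from the loop guard, which avoids assuming realized debits never overshoot zero but requires reading a zero coordinate as ``exhausted''; either convention yields the stated ceiling.
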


\begin{proof}
Write $S_t:=b_{\mathrm{tool},t}+b_{\mathrm{tok},t}$. Since each continuing iteration decreases at least one coordinate by at least $\zeta$ and neither coordinate increases, we have $S_{t+1}\le S_t-\zeta$. Because $S_0=B_{\mathrm{tool}}+B_{\mathrm{tok}}$ and $S_t\ge 0$ for all $t$, the loop can continue at most $\left\lceil(B_{\mathrm{tool}}+B_{\mathrm{tok}})/\zeta\right\rceil$ times. The post-loop extraction and finalization steps are finite, so Algorithm~\ref{alg:two_stage_inference} always returns in finitely many steps.
\end{proof}

\nolinenumbers

\section{Experimental Protocol and Implementation Details}
\label{app:experimental_details}

\paragraph{Fixed controller implementation.}
All audited experiments were run on a workstation with 4 NVIDIA RTX 5880 Ada GPUs. The controller itself is deterministic and lightweight; the dominant cost comes from LLM inference and retrieval calls. The Stage-1 controller is a fixed, training-free scoring rule added on top of the BAVT search procedure. BAVT provides the planner, generator, critic interface, search procedure, and remaining-budget state; our added component is the task-level VOI score action scorer over \search{}, \decompose{}, and \answer{}. No coefficients are learned from labeled trajectories, no regression model is fitted, and no benchmark-specific hyperparameter search is performed over the reported audit cells. The same controller parameters are used across all benchmarks, budget levels, and LLM backbones.

\paragraph{Budget pressure and action scoring.}
We use the normalized remaining-budget pressure
\[
\rho_t
=
1-\min\!\left\{
\frac{b_{\mathrm{tool},t}}{B_{\mathrm{tool}}},
\frac{b_{\mathrm{tok},t}}{B_{\mathrm{tok}}}
\right\},
\]
clipped to $[0,1]$. This scalar increases as either the tool-call or output-token budget becomes tight. It enters both the budget-dependent penalty term $\Pi_t(k;b_t)$ and the action-specific cost scale $d_t(k;b_t)$: additional \search{} and \decompose{} steps are penalized more strongly under tight budgets, while \answer{} becomes relatively more attractive when the trajectory has sufficient support.

\paragraph{Fixed coefficients and guards.}
The main fixed Stage-1 coefficients are a cost-penalty scale of $0.7$, a decomposition bonus of $0.14$, and an early-answer penalty of $0.18$. These values remain fixed in all reported experiments. After forming the utility numerator, the controller divides positive utility by an action-specific cost scale and then applies deterministic guards. The guards suppress premature answer commitment under weak support, suppress decomposition for low-compositionality or factoid-like questions, downweight repeated decomposition after stagnation, and enforce minimum search for sufficiently compositional states. These guards only change the executable action score; they do not change the retrieval backend, generator, sample order, or budget accounting protocol.

\paragraph{Stage-2 finalization.}
The Stage-2 finalizer is a deterministic answer-selection rule over the completed trajectory. It compares the base answer $a_{\mathrm{base}}$ with a refined candidate $a_{\mathrm{ref}}$ derived from the same trace and accepts the refined candidate only under low-risk answer-form conditions, such as yes/no polarity repair, binary-choice repair, typed-slot correction, or supported factoid completion. It abstains under unresolved bridge structure, unresolved comparative reasoning, or missing direct support. This stage adds no tool calls and issues no additional LLM call during finalization.

\section{Full Qwen3-32B Main Results}
\label{app:main_results_qwen32b}

\begin{table*}[hbpt]
\centering
\caption{\textbf{Main strict-budget results on Qwen3-32B.}
Each method reports \texttt{EM/F1} under the same symmetric strict dual-budget protocol with tool-call and output-token caps
$(1,100)$, $(2,200)$, $(2,300)$, and $(3,500)$.
\method{} denotes our full two-stage budget-control method.
The 1st, 2nd, and 3rd best results are highlighted in \colorbox{colorfirst}{\textbf{first}}, \colorbox{colorsecond}{second}, and \colorbox{colorthird}{third} colors. Cell background colors are ranked by the average of EM and F1 scores. Best EM and F1 are independently bolded. $\uparrow$ indicates higher is better.}
\label{tab:main_results}
\small
\setlength{\tabcolsep}{4.6pt}
{\fontfamily{ptm}\selectfont
\resizebox{\textwidth}{!}{%
\begin{tabular}{llccccc}
\toprule
\multirow{2}{*}{\textbf{Benchmark}} & \multirow{2}{*}{\textbf{Budget}}
& \textbf{BATS} & \textbf{BAVT} & \textbf{\aflow} & \textbf{\searcho} & \textbf{\method} \\
\cmidrule(lr){3-7}
& & \textbf{EM/F1} $\uparrow$ & \textbf{EM/F1} $\uparrow$ & \textbf{EM/F1} $\uparrow$ & \textbf{EM/F1} $\uparrow$ & \textbf{EM/F1} $\uparrow$ \\
\midrule
\bamboogle & \budgetlow
& 0.02/0.03 & \cellcolor{colorsecond} 0.11/0.17 & \cellcolor{colorthird} 0.03/0.03 & 0.00/0.02 & \cellcolor{colorfirst} \textbf{0.15}/\textbf{0.21} \\
\bamboogle & \budgetlowermid
& \cellcolor{colorthird} 0.05/0.06 & \cellcolor{colorsecond} 0.21/0.28 & \cellcolor{colorfirst} \textbf{0.33}/\textbf{0.42} & 0.03/0.05 & \cellcolor{colorfirst} \textbf{0.33}/0.42 \\
\bamboogle & \budgetuppermid
& \cellcolor{colorfirst} \textbf{0.42}/\textbf{0.54} & \cellcolor{colorsecond} 0.33/0.42 & \cellcolor{colorsecond} 0.33/0.42 & 0.20/0.24 & \cellcolor{colorfirst} \textbf{0.43}/0.53 \\
\bamboogle & \budgethigh
& \cellcolor{colorsecond} 0.45/0.58 & 0.39/0.47 & \cellcolor{colorthird} 0.37/0.50 & 0.33/0.44 & \cellcolor{colorfirst} \textbf{0.48}/\textbf{0.62} \\
\midrule
\hotpot & \budgetlow
& \cellcolor{colorthird} 0.01/0.02 & \cellcolor{colorsecond} 0.11/0.14 & 0.01/0.01 & 0.00/0.00 & \cellcolor{colorfirst} \textbf{0.14}/\textbf{0.17} \\
\hotpot & \budgetlowermid
& 0.02/0.04 & \cellcolor{colorsecond} 0.28/0.34 & 0.01/0.01 & \cellcolor{colorthird} 0.03/0.04 & \cellcolor{colorfirst} \textbf{0.36}/\textbf{0.41} \\
\hotpot & \budgetuppermid
& \cellcolor{colorfirst} \textbf{0.40}/\textbf{0.49} & \cellcolor{colorthird} 0.34/0.41 & 0.01/0.01 & 0.09/0.14 & \cellcolor{colorsecond} 0.39/0.47 \\
\hotpot & \budgethigh
& \cellcolor{colorfirst} \textbf{0.43}/\textbf{0.52} & \cellcolor{colorsecond} 0.34/0.40 & \cellcolor{colorthird} 0.29/0.36 & 0.22/0.31 & \cellcolor{colorfirst} \textbf{0.43}/0.52 \\
\midrule
\musique & \budgetlow
& \cellcolor{colorsecond} 0.02/0.03 & \cellcolor{colorfirst} 0.11/\textbf{0.17} & \cellcolor{colorthird} 0.01/0.01 & \cellcolor{colorthird} 0.00/0.02 & \cellcolor{colorfirst} \textbf{0.12}/0.16 \\
\musique & \budgetlowermid
& 0.06/0.08 & \cellcolor{colorsecond} 0.26/0.33 & 0.03/0.04 & \cellcolor{colorthird} 0.03/0.06 & \cellcolor{colorfirst} \textbf{0.28}/\textbf{0.35} \\
\musique & \budgetuppermid
& \cellcolor{colorfirst} \textbf{0.35}/\textbf{0.43} & \cellcolor{colorthird} 0.26/0.34 & 0.06/0.08 & 0.16/0.19 & \cellcolor{colorsecond} 0.33/0.40 \\
\musique & \budgethigh
& \cellcolor{colorsecond} 0.37/0.48 & \cellcolor{colorthird} 0.34/0.42 & 0.11/0.16 & 0.29/0.35 & \cellcolor{colorfirst} \textbf{0.40}/\textbf{0.50} \\
\midrule
\twowiki & \budgetlow
& 0.02/0.02 & \cellcolor{colorsecond} 0.08/0.10 & \cellcolor{colorthird} 0.07/0.07 & 0.00/0.04 & \cellcolor{colorfirst} \textbf{0.09}/\textbf{0.11} \\
\twowiki & \budgetlowermid
& 0.05/0.05 & \cellcolor{colorthird} 0.28/0.33 & \cellcolor{colorfirst} \textbf{0.47}/\textbf{0.55} & 0.03/0.04 & \cellcolor{colorsecond} 0.44/0.49 \\
\twowiki & \budgetuppermid
& \cellcolor{colorfirst} 0.53/\textbf{0.64} & 0.40/0.45 & \cellcolor{colorthird} 0.47/0.55 & 0.15/0.20 & \cellcolor{colorsecond} \textbf{0.54}/0.62 \\
\twowiki & \budgethigh
& \cellcolor{colorfirst} 0.62/\textbf{0.73} & \cellcolor{colorthird} 0.56/0.63 & \cellcolor{colorsecond} 0.58/0.70 & 0.27/0.36 & \cellcolor{colorfirst} \textbf{0.64}/0.71 \\
\midrule
\multicolumn{2}{l}{\textbf{Average}}
& \cellcolor{colorthird} 0.24/0.30 & \cellcolor{colorsecond} 0.28/0.34 & 0.20/0.24 & 0.11/0.16 & \cellcolor{colorfirst} \textbf{0.35}/\textbf{0.42} \\
\bottomrule
\end{tabular}}
}
\vspace{-0.6em}
\end{table*}

\section{Qwen3.5-122B Backbone-Sensitivity Results}
\label{app:qwen35_scaling}

Figure~\ref{fig:scaling_qwen35} reports the full budget scaling curves
for the Qwen3.5-122B backbone across all four benchmarks.
Compared with Qwen3-32B and GPT-5.4-Mini (Figure~\ref{fig:scaling_cross_model}),
Qwen3.5-122B exhibits a more mixed competitive regime:
\bats{} and \vanilla{} lead in several cells, particularly at upper budgets,
while \method{} maintains an advantage at lower budgets where
explicit budget penalty provides the largest marginal benefit.

\begin{figure}[hbpt]
\centering
\includegraphics[width=\textwidth]{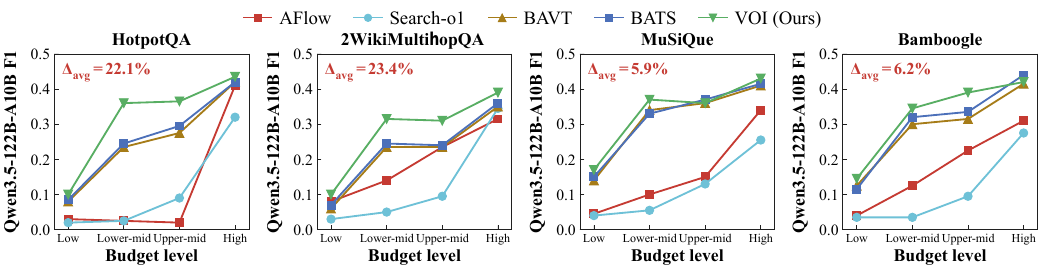}
\caption{\textbf{Budget scaling curves for Qwen3.5-122B across four datasets.}
Each column corresponds to one benchmark; all methods are evaluated under
the shared dual-budget protocol. See Figure~\ref{fig:scaling_cross_model}
for Qwen3-32B and GPT-5.4-Mini results.}
\label{fig:scaling_qwen35}
\end{figure}

\section{Stage-1 Component Ablation Protocol}
\label{app:stage1_component_ablation}

The ablation in Section~\ref{subsec:stage1_component_ablation} is designed to match the Stage~1 controller in Section~\ref{subsec:controller}. Each variant removes exactly one component from the three-stage scoring pipeline while keeping the search procedure, prompts, retrieval backend, sample set, and hard-budget audit unchanged. The \textit{w/o penalty} variant removes the budget-dependent penalty term $\Pi_t(k;b_t)$ from Eq.~\eqref{eq:controller_score}; \textit{w/o normalization} replaces Eq.~\eqref{eq:normalized_score} with the unnormalized score $[u_t(k)]_+$; \textit{w/o structural} removes $\Psi_t(k)$ from Eq.~\eqref{eq:controller_score}; and \textit{w/o guards} bypasses $\mathfrak{G}_t$ in Eq.~\eqref{eq:guarded_score}. The full benchmark-level results are reported in Table~\ref{tab:stage1_component_ablation_qwen32b_full} in the main text.

\section{Additional Audit Results and Usage Diagnostics}
\label{app:scope}

\paragraph{Token accounting.}
For BAVT-family runs, \texttt{avg API tokens} includes prompt and completion tokens across planner, generator, and critic calls. \texttt{avg budget output tokens} is the quantity debited from the controller-side budget and reflects only the output-token budgeted component. All methods in the main table are evaluated under the same dual-budget constraint; the budget output token column reflects the output tokens charged against the budget in each case.

\paragraph{Method-specific audit notes.}
\aflow{}, \searcho{}, \bats{}, \vanilla{}, and \method{} are all scored under the same hard dual-budget audit used in the main table: the same Search-R1 retrieval backend, question-only queries, \texttt{top\_k=5}, and the same tool-call and output-token caps. Any example whose realized tool calls or output tokens exceed the target budget is counted as failed for that cell. For \aflow{}, this audit is implemented over replayed workflows: a replayed sample contributes to the scored cell only if its realized tool calls and answer output tokens remain within the target cap. \searcho{} and \bats{} are reported under the same audit semantics, and \vanilla{}/\method{} share the same BAVT-family search backbone under the same budget ladder.

\paragraph{Main-table comparator set.}
The primary empirical comparator set in this paper is \aflow{}, \bats{}, \searcho{}, \vanilla{}, and \method{}. All five appear in the main table because all five are evaluated under the same hard tool/output-token audit. 

\begin{table*}[t]
\caption{\textbf{Cross-backbone macro deltas for VOI against four baselines.}
$\Delta$F1 and $\Delta$EM are averaged over the 48 audited cells
(3 backbones $\times$ 4 benchmarks $\times$ 4 budgets), and their 95\% CI is a
cell-level bootstrap confidence interval.}
\label{tab:crossmodel_macro_delta_em_f1}
\centering
\small
\setlength{\tabcolsep}{0pt}
\begin{tabular*}{\textwidth}{@{\extracolsep{\fill}}lcccc@{}}
\toprule
Comparison & $\Delta$EM (est.) & 95\% CI & $\Delta$F1 & 95\% CI \\
\midrule
VOI vs AFlow     & 0.2456 & [0.2100, 0.2819] & 0.3021 & [0.2556, 0.3491] \\
VOI vs BATS      & 0.1081 & [0.0437, 0.1725] & 0.1219 & [0.0439, 0.1999] \\
VOI vs Search-o1 & 0.2734 & [0.2360, 0.3112] & 0.2993 & [0.2625, 0.3370] \\
VOI vs BAVT      & 0.0492 & [0.0382, 0.0603] & 0.0530 & [0.0423, 0.0639] \\
\bottomrule
\end{tabular*}
\end{table*}

\paragraph{Feasible-only usage audit.}
Table~\ref{tab:qwen32_feasible_usage_cellwise} reports realized usage on the feasible subset of each dataset-budget cell. For scoring, all main-table methods use the same hard tool/output-token audit. This table is a secondary feasibility-conditioned diagnostic: for each benchmark and budget level, we retain only examples satisfying both caps and then average realized tool calls and tokens over the retained subset. Different methods can therefore have different feasible subsets. For \method{}, \bats{}, \vanilla{}, and \searcho{}, tokens are controller-debited output tokens. In these Qwen3-32B runs, \aflow{} feasibility is mainly tool-cap limited.

% NOTE: Synthetic zero-imputed usage table for pattern analysis only.
% Do not report these imputed cells as audited experimental results.

\begin{table*}[t]
\centering
\scriptsize
\caption{\textbf{Feasible-only average resource usage by dataset and budget.}
Each method cell reports average tool calls / average output tokens, i.e., \texttt{Avg. Tools / Avg. Tok}, computed only over examples that remain feasible under the corresponding tool/output-token cap.
Budget labels follow the four-level budget ladder defined in Section~\ref{subsec:setup}.
For scoring, all methods are subject to the same hard tool/output-token caps. The table is a feasible-only usage diagnostic rather than a scoring rule: token entries report the output tokens available from each audited execution. For \aflow{}, which is evaluated through replayed workflows, the reported token entry corresponds to answer-output tokens from the replay.}
\label{tab:qwen32_feasible_usage_cellwise}
\setlength{\tabcolsep}{3.8pt}
\resizebox{\textwidth}{!}{%
\begin{tabular}{llccccc}
\toprule
\multirow{2}{*}{Benchmark} & \multirow{2}{*}{Budget}
& \method{} (Ours) & BATS & \vanilla{} & \aflow{}$^\dagger$ & \searcho{} \\
\cmidrule(lr){3-7}
& & \multicolumn{5}{c}{\texttt{Avg. Tools / Avg. Tokens}} \\
\midrule
\bamboogle & \budgetlow      & 0.97 / 73.7 & 0.93 / 86.4 & 0.94 / 69.7 & 1.00 / 2.0 & 0.02 / 100.0 \\
\bamboogle & \budgetlowermid & 1.61 / 138.8 & 1.00 / 190.9 & 1.43 / 135.4 & 1.96 / 7.8 & 0.69 / 193.9 \\
\bamboogle & \budgetuppermid & 1.64 / 171.3 & 1.58 / 240.2 & 1.62 / 179.7 & 1.96 / 7.8 & 0.87 / 249.6 \\
\bamboogle & \budgethigh     & 1.91 / 195.9 & 1.91 / 268.5 & 1.89 / 211.3 & 2.19 / 7.0 & 1.06 / 350.5 \\
\midrule
\hotpot & \budgetlow      & 0.92 / 76.6 & 0.88 / 84.9 & 0.84 / 67.7 & 1.00 / 2.0 & 0.04 / 100.0 \\
\hotpot & \budgetlowermid & 1.49 / 134.2 & 1.00 / 189.6 & 1.56 / 138.6 & 1.00 / 2.0 & 0.63 / 193.7 \\
\hotpot & \budgetuppermid & 1.61 / 162.7 & 1.62 / 241.2 & 1.61 / 158.8 & 1.00 / 2.0 & 0.93 / 262.1 \\
\hotpot & \budgethigh     & 1.91 / 197.7 & 1.91 / 269.0 & 1.88 / 218.9 & 2.98 / 8.9 & 1.09 / 361.5 \\
\midrule
\musique & \budgetlow      & 0.95 / 64.6 & 0.91 / 82.7 & 0.92 / 67.3 & 1.00 / 2.1 & 0.03 / 100.0 \\
\musique & \budgetlowermid & 1.60 / 141.1 & 1.00 / 193.8 & 1.46 / 137.3 & 1.48 / 4.6 & 0.68 / 192.5 \\
\musique & \budgetuppermid & 1.62 / 174.5 & 1.49 / 239.8 & 1.55 / 178.9 & 1.76 / 5.8 & 0.88 / 251.2 \\
\musique & \budgethigh     & 1.85 / 191.5 & 1.95 / 278.7 & 1.87 / 226.0 & 2.21 / 7.2 & 1.09 / 364.4 \\
\midrule
\twowiki & \budgetlow      & 0.81 / 74.8 & 0.86 / 85.6 & 0.73 / 69.7 & 1.00 / 2.3 & 0.13 / 100.0 \\
\twowiki & \budgetlowermid & 1.73 / 140.7 & 1.17 / 189.8 & 1.64 / 132.9 & 1.90 / 6.8 & 0.50 / 193.6 \\
\twowiki & \budgetuppermid & 1.82 / 166.2 & 1.91 / 249.7 & 1.72 / 176.9 & 1.90 / 6.6 & 1.00 / 268.5 \\
\twowiki & \budgethigh     & 2.04 / 193.9 & 2.09 / 271.4 & 1.91 / 222.2 & 2.14 / 6.1 & 1.22 / 376.4 \\
\bottomrule
\end{tabular}}
\vspace{-0.6em}
\end{table*}

\end{document}